\documentclass{article} 
\usepackage{style/iclr2026_conference_with_preprint,times}

\usepackage{amsthm}
\usepackage{xcolor}
\usepackage{amsmath,amsfonts,bm}
\usepackage{amssymb}
\makeatletter
\newtheorem*{rep@theorem}{\rep@title}
\newcommand{\newreptheorem}[2]{%
\newenvironment{rep#1}[1]{%
 \def\rep@title{#2 \ref{##1}}%
 \begin{rep@theorem}}%
 {\end{rep@theorem}}}
\makeatother

\newtheorem{theorem}{Theorem}

\newreptheorem{theorem}{Theorem}
\newtheorem{assumption}{Assumption}


\definecolor{myred}{RGB}{215,48,39}
\definecolor{mygreen}{RGB}{26,152,80}
\newcommand{\cmark}{\textcolor{mygreen}{\ding{51}}}
\newcommand{\xmark}{\textcolor{myred}{\ding{55}}}
\newcommand{\halfmark}{\textcolor{gray}{\checkmark\kern-1.1ex\raisebox{.7ex}{\rotatebox[origin=c]{125}{--}}}}
\usepackage[framemethod=TikZ]{mdframed}
\mdfdefinestyle{MyFrame}{%
    linecolor=black,
    outerlinewidth=.3pt,
    roundcorner=5pt,
    innertopmargin=1pt, 
    innerbottommargin=1pt, 
    innerrightmargin=1pt,
    innerleftmargin=1pt,
    backgroundcolor=black!0!white}

\mdfdefinestyle{MyFrame2}{%
    linecolor=white,
    outerlinewidth=1pt,
    roundcorner=1pt,
    innertopmargin=3pt,
    innerbottommargin=2pt,
    innerrightmargin=7pt,
    innerleftmargin=7pt,
    backgroundcolor=black!3!white}

\mdfdefinestyle{MyFrameEq}{%
    linecolor=white,
    outerlinewidth=0pt,
    roundcorner=0pt,
    innertopmargin=0pt,
    innerbottommargin=0pt,
    innerrightmargin=7pt,
    innerleftmargin=7pt,
    backgroundcolor=black!3!white}

\newcommand{\RNum}[1]{\uppercase\expandafter{\romannumeral #1\relax}}

\newcommand{\R}{\mathcal{R}}

\newcommand{\vertiii}[1]{{\left\vert\kern-0.25ex\left\vert\kern-0.25ex\left\vert #1 
    \right\vert\kern-0.25ex\right\vert\kern-0.25ex\right\vert}}
\newcommand{\vertiiii}[1]{{\vert\kern-0.25ex\vert\kern-0.25ex\vert #1 
    \vert\kern-0.25ex\vert\kern-0.25ex\vert}}


\usepackage{mathtools}



\newcommand{\cut}[1]{}


\newcommand{\removelatexerror}{\let\@latex@error\@gobble}







\def\eqref#1{Eq.~\ref{#1}}









\def\1{\bm{1}}










\DeclareMathAlphabet{\mathsfit}{\encodingdefault}{\sfdefault}{m}{sl}
\SetMathAlphabet{\mathsfit}{bold}{\encodingdefault}{\sfdefault}{bx}{n}




\def\R{{\mathbb{R}}}














\usepackage{graphicx}
\usepackage[utf8]{inputenc} 
\usepackage[T1]{fontenc}    
\usepackage{hyperref}       
\usepackage{url}            
\usepackage{booktabs}       
\usepackage{amsfonts}       
\usepackage{nicefrac}       
\usepackage{microtype}      
\usepackage{tabularx}
\usepackage[font=small]{caption}
\usepackage{wrapfig}
\usepackage{inconsolata}

\usepackage{thm-restate}
\usepackage[ruled,vlined]{algorithm2e}

\usepackage{pgfplots}
\usepackage{siunitx}
\usepackage{bbding}
\usepackage[capitalise]{cleveref}   
\usepackage{xfrac}
\usepackage[usestackEOL]{stackengine}
\usepackage{indentfirst}
\usepackage{csquotes}
\usepackage{pgf}
\usepackage{varwidth}
\usepackage{verbatimbox}
\usepackage{verbatim}
\usepackage[table]{xcolor}
\usepackage{bm}
\usepackage[dvipsnames]{xcolor}
\usepackage[colorinlistoftodos,prependcaption,textsize=small]{todonotes}
\usepackage{amsmath,amsfonts,bm,pifont}
\usepackage{enumitem}
\usepackage{thm-restate}
\usepackage{mdframed}
\usepackage{multirow}
\mdfdefinestyle{MyFrame}{%
    linecolor=black,
    outerlinewidth=.3pt,
    roundcorner=5pt,
    innertopmargin=1pt, 
    innerbottommargin=1pt, 
    innerrightmargin=1pt,
    innerleftmargin=1pt,
    backgroundcolor=black!0!white}

\mdfdefinestyle{MyFrame2}{%
    linecolor=white,
    outerlinewidth=1pt,
    roundcorner=5pt,
    innertopmargin=3pt,
    innerbottommargin=3pt,
    innerrightmargin=5pt,
    innerleftmargin=5pt,
    backgroundcolor=black!8!white}

\mdfdefinestyle{MyFrameEq}{%
    linecolor=white,
    outerlinewidth=0pt,
    roundcorner=0pt,
    innertopmargin=0pt,
    innerbottommargin=0pt,
    innerrightmargin=7pt,
    innerleftmargin=7pt,
    backgroundcolor=black!3!white}

\hypersetup{
    colorlinks = true,
    linkbordercolor = {magenta},
    linkcolor={magenta},
    citecolor = {MidnightBlue},
}

\makeatletter
\renewcommand{\p@subsection}{\S} 
\renewcommand{\p@section}{\S} 
\makeatother
\crefformat{section}{#2#1#3}
\crefformat{subsection}{#2#1#3}
\Crefformat{section}{#2#1#3}      
\Crefformat{subsection}{#2#1#3}

\newcommand{\emetric}{\(\mathcal{E}\text{‑}\mathcal{W}_2\)\xspace}
\newcommand{\torusmetric}{\(\mathbb{T}\text{‑}\mathcal{W}_2\)\xspace}

\newcommand{\longname}{\textsc{Few-step Accurate Likelihoods for Continuous Flows}\xspace}
\newcommand{\shortname}{\textsc{FALCON}\xspace}
\newcommand{\name}{\textsc{FALCON}\xspace}
\makeatletter
\providecommand{\section}{}
\renewcommand{\section}{%
  \@startsection{section}{1}{\z@}%
                {-1.0ex \@plus -0.5ex \@minus -0.2ex}%
                { 1.0ex \@plus  0.3ex \@minus  0.2ex}%
                {\large\sc\raggedright}%
}
\providecommand{\subsection}{}
\renewcommand{\subsection}{%
  \@startsection{subsection}{2}{\z@}%
                {-0.75ex \@plus -0.5ex \@minus -0.2ex}%
                { 0.75ex \@plus  0.2ex}%
                {\normalsize\sc\raggedright}%
}
\providecommand{\subsubsection}{}
\renewcommand{\subsubsection}{%
  \@startsection{subsubsection}{3}{\z@}%
                {-0.5ex \@plus -0.5ex \@minus -0.2ex}%
                { 0.5ex \@plus  0.2ex}%
                {\normalsize\sc\raggedright}%
}
\providecommand{\paragraph}{}
\renewcommand{\paragraph}{%
  \@startsection{paragraph}{4}{\z@}%
                {0.3ex \@plus 0.2ex \@minus 0.2ex}%
                {-1em}%
                {\normalsize\bf}%
}
\setlength{\textfloatsep}{5mm}

\parskip=4pt
\setlength{\abovedisplayskip}{1mm}
\setlength{\belowdisplayskip}{1mm}
\setlength{\abovedisplayshortskip}{1mm}
\setlength{\belowdisplayshortskip}{1mm}
\makeatother
\title{FALCON:\ \longname}


\author{%
Danyal Rehman$^{1,2,3}$\thanks{Correspondence to \texttt{danyal.rehman@mila.quebec}, \texttt{atong@aithyra.at}},
Tara Akhound-Sadegh$^{1,4}$,
Artem Gazizov$^5$,
Yoshua Bengio$^{1,2,6}$,\\
\textbf{Alexander Tong$^{3}$}\\
$^1$Mila -- Quebec AI Institute, $^2$Université de Montréal, $^3$AITHYRA, \\
$^4$McGill University, $^5$Harvard University, $^6$CIFAR Senior Fellow
}

%

\iclrfinalcopy 
\begin{document}

\maketitle

\begin{abstract}
Scalable sampling of molecular states in thermodynamic equilibrium is a long-standing challenge in statistical physics. Boltzmann Generators tackle this problem by pairing a generative model, capable of exact likelihood computation, with importance sampling to obtain consistent samples under the target distribution. Current Boltzmann Generators primarily use continuous normalizing flows (CNFs) trained with flow matching for efficient training of powerful models. However, likelihood calculation for these models is extremely costly, requiring thousands of function evaluations per sample, severely limiting their adoption. In this work, we propose \longname (\name), a method which allows for few-step sampling with a likelihood accurate enough for importance sampling applications by introducing a hybrid training objective that encourages invertibility. We show \name outperforms state-of-the-art normalizing flow models for molecular Boltzmann sampling and is \emph{two orders of magnitude faster} than the equivalently performing CNF model.
\end{abstract}

\section{Introduction}
Sampling molecular configurations from the Boltzmann distribution $p(x) \propto \exp(- \mathcal{E}(x))$ where $\mathcal{E}(x)$ is the potential energy of a configuration $x$, is a foundational and long-standing challenge in statistical physics. The ability to generate samples according to this distribution is the foundation for determining many other observables---such as free energies and heat capacities---which govern real-world behaviour. Consequently, efficient Boltzmann sampling is essential for progress in a large range of areas, from characterizing the function of biomolecules, to accelerating drug design, and discovering novel materials~\citep{frenkel2023understanding,liu2001monte,ohno2018computational,stoltz2010free}.

The difficulty of this task arises from the structure of the energy for molecules of interest. The energy landscape is high-dimensional and non-smooth with many local energy minima. These rugged energies severely challenge classical simulation-based methods like Molecular Dynamics (MD)~\citep{leimkuhler2015molecular} and Monte Carlo Markov Chains (MCMC) as they become easily trapped in local minima, requiring a computationally inaccessible number of steps to mix between modes. These samplers generate many correlated samples, creating large inefficiencies, where an ideal sampler would generate i.i.d.\ samples from the underlying data distribution, $p(x)$. 

\begin{wrapfigure}{r}{8.9cm}
\centering\vspace{-12px}
\includegraphics[width=0.6\textwidth]{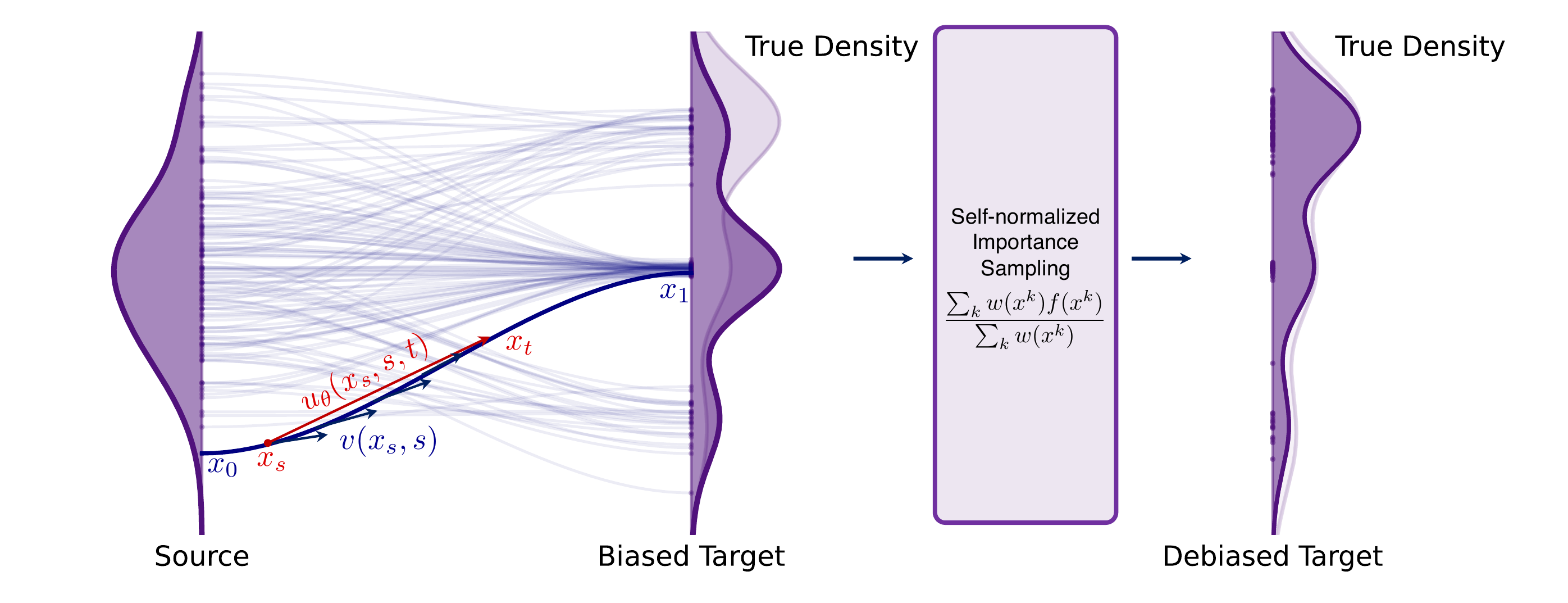}\vspace{-5px}
\caption{Flow map learns from biased data, with SNIS re-weighting generated samples consistent with the Boltzmann distribution, approaching equality with infinite samples under mild regularity conditions.}\vspace{-10px}
\label{fig:boltzmanngenerator}
\end{wrapfigure}
\looseness=-1
Boltzmann Generators (BGs) have emerged as a way to address this inefficiency by amortizing the cost through the training of a generative model to learn to sample from $p_\theta(x)$ close to $p(x)$. These samples can then be corrected to $p(x)$ using self-normalized importance sampling (SNIS) \mbox{\citep{noe2019boltzmann}}. SNIS requires efficient access to $p_\theta(x)$ and $\mathcal{E}(x)$ for every sample drawn $x \sim p_\theta(x)$ for the correction step, but guarantees statistical consistency of the corrected samples, as illustrated in~\cref{fig:boltzmanngenerator}~\citep{noe2019boltzmann,liu2001monte}.

The main design choice in BGs is which type of generator to use. Modern BGs~\citep{klein2023equivariant,klein_transferable_2024} primarily make use of generators based on continuous normalizing flows (CNFs)~\citep{chen_neural_2018,grathwohl_ffjord_2018} due to their expressive power, ease of training, and flexibility of parameterization~\citep{kohler2020equivariant} (see~\cref{tab:summary}). However, while it is possible to access the $p_\theta(x)$ of a CNF, it is extremely computationally costly to approximate $p_\theta(x)$ with sufficient accuracy. Two primary reasons contribute to this cost:\ (\textbf{1}) Approximate estimators are not sufficiently accurate, making full Jacobian calculations necessary for each step along the flow; and (\textbf{2}) Many steps are necessary to control discretization error for sufficient performance (\cref{fig:performancevtime}).

\begin{wrapfigure}{r}{6.5cm}
\centering
\vspace{-10px}
\includegraphics[width=0.45\textwidth]{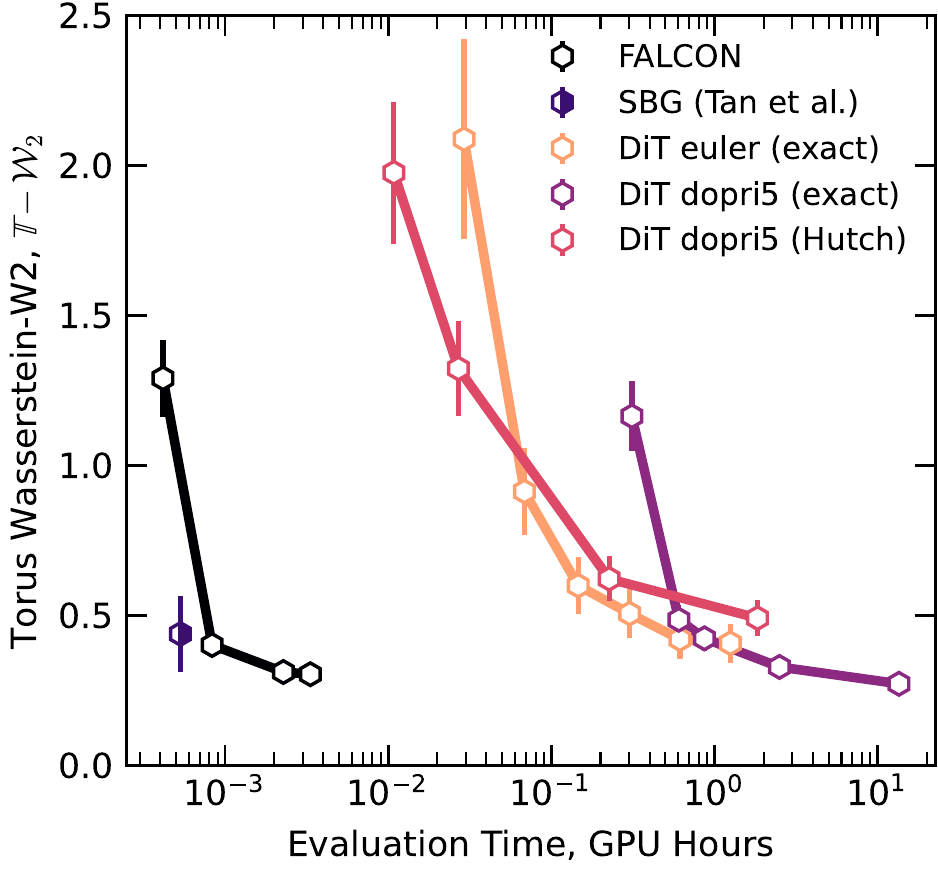}
\vspace{-7px}
\caption{Performance-inference time comparison between NFs and CNFs for $10^4$  dipeptide samples.}\vspace{-10px}
\label{fig:performancevtime}
\end{wrapfigure}

Recently, there have been significant advancements in few-step generation using flow models~\citep{song2023consistencymodels,boffi2025buildconsistencymodellearning,frans2025stepdiffusionshortcutmodels,guo2025splitmeanflowintervalsplittingconsistency,sabour2025alignflowscalingcontinuoustime,geng2025meanflowsonestepgenerative}. These models are extremely powerful few-step generators with flexible architectures and efficient simulation-free training;\ however, these few-step samplers do not natively admit efficient estimators of the likelihood, making them unsuitable for the high-precision demands of importance sampling and scientific applications such as Boltzmann Generation~\citep{rehman2025fortforwardonlyregressiontraining}.

\looseness=-1
In this work, we investigate how to design a generative model that combines the best of both worlds: the training efficiency and architectural freedom of simulation-free flow models with the fast sampling and likelihood evaluation of discrete-time invertible models. We propose \longname (\shortname), a flow-based model that enables few-step sampling while providing a likelihood estimate that is both fast to compute and accurate enough for importance sampling applications. \name leverages a hybrid training objective that combines a regression loss for stable and efficient few-step generation with a cycle-consistency term to encourage invertibility prior to convergence. Our main contributions are:\

\begin{itemize}[topsep=0pt, partopsep=0pt, itemsep=3pt, parsep=0pt, leftmargin=*]
    \item We introduce \longname (\shortname):\ a new continuous flow-based generative model for Boltzmann sampling that is invertible, trainable with a regression loss, and supports free-form architectures, while enabling both few-step generation and efficient likelihood evaluation.
    \item Orthogonally, we introduce a simple and scalable, softly equivariant continuous flow architecture that significantly improves over the current state-of-the-art equivariant flow model architecture.
    \item We show that \shortname is \textbf{two orders of magnitude faster} than CNF-based Boltzmann Generators for equivalent performance (\cref{fig:performancevtime}), drastically reducing the computational cost of CNFs, and taking significant strides towards real-world large-scale molecular sampling applications.
    \item We show that \shortname outperforms the current state-of-the-art normalizing flow-based Boltzmann Generator across all metrics, even when \shortname is given \textbf{250$\times$ fewer samples}~\citep{tan_scalable_2025}.
\end{itemize}

\section{Background and Preliminaries}

We are interested in drawing statistically independent samples from a target Boltzmann distribution $p_{\text{target}}$ with partition function $\mathcal{Z}$, defined over $\mathbb{R}^d$:
\begin{equation}
    p_{\text{target}}(x) \propto \exp \left (-\mathcal{E}(x) \right ), \quad \mathcal{Z} = \int_{\mathbb{R}^d} \exp \left (-\mathcal{E}(x) \right ) dx
\end{equation}
where $\mathcal{E}: \mathbb{R}^d \to \mathbb{R}$ is the energy of the system, which we can efficiently compute for any $x$. In this work we do not require the energy to be differentiable. Unlike in the pure sampling setting~\citep{akhound-sadegh_iterated_2024,havens2025adjointsamplinghighlyscalable,akhoundsadegh2025progressiveinferencetimeannealingdiffusion,midgley2022flow,zhang2021path,vargas2023denoising}, we also assume access to a small biased dataset $\mathcal{D} = \{x^i\}_{i=1}^N$ of $N$ samples~\citep{noe2019boltzmann}. This makes it possible to perform an initial learning phase that fits a generative model with parameters $\theta$, producing a proposal distribution, $p^\theta(x)$ \citep{noe2019boltzmann}.

\paragraph{Boltzmann Generators.} (BGs)~\citep{noe2019boltzmann} combine deep generative models capable of exact likelihoods, with a target energy function and a self-normalized importance sampling (SNIS) step to re-weight generated samples to the target Boltzmann distribution. The generative model is first trained on a possibly biased dataset $\mathcal{D}$ as close as possible to $p_{\text{target}}$. BGs then draw $K$ independent samples $x^i \sim p_1^\theta, i \in [K]$ and compute the corresponding unnormalized importance weights for each sample such that $w(x^i) \triangleq \exp(-\mathcal{E}(x^i)) / p_1^\theta(x^i)$. Given these importance weights, we can then compute a consistent Monte--Carlo estimate of any observable $o(x)$ of interest under $p_{\text{target}}$ using self-normalized importance sampling~\citep{liu2001monte} as:\
\begin{equation}
    \mathbb{E}_{p_{\text{target}}} [ o(x)] = \mathbb{E}_{p_1^\theta}[o(x) \bar{w}(x)] \approx \frac{\sum_{i=1}^K w(x^i) o(x^i)}{\sum_{i=1}^K w(x^i)}.
\end{equation}
This allows for inference-time scaling as the Monte--Carlo estimate of any observable converges in probability to the correct value as the number of samples grows.

\paragraph{Flow Matching Models.} Flow matching models~\citep{lipman_flow_2022,albergo_building_2023,liu_rectified_2022,peluchetti2021} are probabilistic generative models that learn a continuous interpolation between an easy-to-sample distribution $p_0 = p_{\text{noise}}$ and the data distribution $p_1 = p_{\text{data}}$ in $\mathbb{R}^d$. Let $x_s = s x_1 + (1 - s) x_0$ be a point at time $s \in [0,1]$ between two points $x_0 \sim p_0$ and $x_1 \sim p_1$. The flow matching objective is then $\mathbb{E}_{x_0 \sim p_0, x_1 \sim p_1, s \sim \text{Unif}(0,1)} w(s) \| v_\theta(x_s, s) - (x_1 - x_0) \|_2^2$ for some parameterized vector field $v_\theta$, and weighting function $w: [0,1] \to \mathbb{R}^+$. We can then sample using an ordinary differential equation (ODE) $x_s^\theta = \int_0^s v_\theta(x_\tau, \tau) d \tau$ with the initial condition $x_0 \sim p_0$ and are guaranteed (under some mild assumptions) that $p^\theta_1(\hat{x}_1) \approx p_1(\hat{x}_1)$.

Furthermore, the density $p^\theta_s$ can be computed using the instantaneous change of variables formula~\citep{chen_neural_2018} $\frac{\partial \log p(x_s)}{\partial s} = -\text{tr} \left (\frac{\partial v_\theta}{\partial x_s} \right )$
using the integral across time by solving a single $d+1$ dimensional ODE:
\begin{equation}\label{eq:continuous_likelihood}
\begin{bmatrix}
    x_t \\
    \log p_s^\theta(x_s)
\end{bmatrix}
 = \int_0^s \begin{bmatrix}
v_\theta(x_\tau, \tau) \\
-\text{tr} \left( \frac{\partial v_\theta}{\partial x_\tau} \right )\end{bmatrix}  d \tau, \text{ with initial condition }
\begin{bmatrix}
    x_0 \\
    \log p_0(x_0)
\end{bmatrix}
\end{equation}
where the integral is discretized into $T$ steps and the trace can either be computed exactly in $O(d T)$ function evaluations or approximated using Hutchinson's trace estimator $\text{tr}(J) = \mathbb{E}_\epsilon [ \epsilon^T J \epsilon ]$ for some noise vector $\epsilon \in \mathbb{R}^d$ in $O(T)$ function evaluations~\citep{Hutchinson01011990}. In practice, this is a major bottleneck because a large number of steps is needed to control discretization error (see~\cref{fig:performancevtime}).

\begin{wraptable}{r}{0.61\columnwidth}
    \centering\vspace{-13px}
    \caption{\small Related method overview}\vspace{-10pt}
\resizebox{0.6\columnwidth}{!}{
\begin{tabular}{lcccc}
\toprule
Method  & Invertible &  Regression-loss & Few Step & FreeForm Arch. \\
\midrule
BioEmu            & \xmark  & \cmark & \xmark & \cmark \\
FlowMaps          & \xmark  & \cmark & \cmark & \cmark \\
TBG               & \cmark  & \cmark & \xmark & \cmark \\
RegFlow           & \cmark  & \cmark & \cmark & \xmark \\
Prose             & \cmark  & \xmark & \cmark & \xmark \\
FFFlows           & \cmark  & \xmark & \cmark & \cmark \\
\shortname (Ours) & \cmark  & \cmark & \cmark & \cmark  \\
\bottomrule
\end{tabular}
}
\vspace{-5mm}
\label{tab:summary}
\end{wraptable}
\paragraph{Few-step Flow Models.} Flow matching models can require hundreds of steps to accurately approximate $p_1$. To speed up sampling, few-step flow models such as consistency models (CMs)~\citep{song2023consistencymodels}, optimal transport-based methods~\citep{pooladian2023multisampleflowmatchingstraightening,tong_conditional_2023,tong2024simulationfreeschrodingerbridgesscore,shi2023diffusionschrodingerbridgematching}, and flow map models~\citep{boffi2025buildconsistencymodellearning,sabour2025alignflowscalingcontinuoustime,geng2025meanflowsonestepgenerative,frans2025stepdiffusionshortcutmodels,guo2025splitmeanflowintervalsplittingconsistency} attempt to train a model that generates high quality samples in many fewer steps. Recently, efficient models that take not only the current sample time, but also the target sample time have shown particular flexibility and effectiveness in the one- to few-step regimes. In these models, $u_\theta$ is augmented with an additional input $t$, which denotes the target time to capture the \textit{average velocity}~\citep{geng2025meanflowsonestepgenerative} $u$ as:
\begin{equation}
    u(x_s, s, t) = \frac{1}{t - s} \int_s^t v(x_\tau, \tau) d \tau
\end{equation}
to minimize the average velocity objective:\
\begin{equation}\label{eq:average_velocity}
    \mathbb{E}_{s, t, x_s} \left [w (s, t) \left\| u_\theta(x_s, s, t) -  \frac{1}{t - s} \int_s^t v(x_\tau, \tau) d \tau \right\|^2\right ]
\end{equation}
where the average velocity $u_\theta$ is parameterized by a neural network as depicted in~\cref{fig:boltzmanngenerator} and $v$ is the vector field of the probability flow ODE that transports samples from the noise distribution $p_0$ to the data distribution $p_1$. Once the average velocity is learned, then samples can be drawn using any discretization of the time interval $[0,1]$, such that $t_0=0, t_1, \ldots t_T=1$ as $x_{t_i} = x_{t_{i-1}} + (t_i - t_{i-1}) u_\theta(x_{t_{i-1}}, t_{i-1}, t_i)$ for $i \in 1 \ldots T$. However, thus far, few-step flow models have only been applied for fast generation, and, as we show, do not natively guarantee efficient access to likelihoods in realistic settings as the learned average velocity map $u_\theta$ is not guaranteed to be invertible before the training objective is perfectly minimized, making the standard change-of-variables formula inapplicable. These models and their relationship to \shortname are summarized in~\cref{tab:summary}.

\section{\longname}\label{sec:falcon}
We now introduce \longname (\shortname), a novel flow-based generative model designed to address the inherent efficiency limitations of using continuous flow models as Boltzmann Generators. Our method departs from traditional continuous normalizing flow (CNFs) by training a flow map that operates in a few discrete steps, while simultaneously achieving invertibility to ensure fast and accurate likelihood computation for Boltzmann Generation. This is achieved through a hybrid training objective, which, by enabling stable few-step generation, dramatically reduces the inference cost. This efficiency allows us to train more expressive architectures~\citep{vaswani2017attention,peebles_scalable_2023,ma2024sit} that were previously computationally infeasible to scale in the Boltzmann Generator setting.

\paragraph{Flow Maps are Flawed Boltzmann Generators.} The core of \shortname is a generative process that learns an invertible map from a simple base distribution $p_0$ to the target molecular distribution $p_1$ in a small number of steps. We first examine the suitability of the existing few-step flows for importance sampling applications, concluding that, on their own, they are not sufficient. We first define the continuous map with respect to a vector field $v$ as:
\begin{equation}
    X_v(x_s, s, t) = \int_s^t v(x_\tau, \tau) d \tau + x_s,
\end{equation}
and note that under mild regularity conditions on $v$, this map is always invertible up to discretization error. For any invertible map, we can compute the change in density with respect to the input using the change of variables formula, which requires computing the Jacobian, at the approximate cost of $d$ function evaluations, and the determinant, which, while an $O(d^3)$ operation, is in practice negligible compared to the function evaluation cost (See \cref{app:inference}).

This invertibility property also holds for flow map models at the optima, which we address in the following proposition.

\begin{mdframed}[style=MyFrame2]
\begin{restatable}{proposition}{avgvelocity}

    Let $u^\star_\theta$ be a minimizer of \eqref{eq:average_velocity} with respect to some $v$. Also, define the Jacobian of $X$ as $\mathbf{J}_{X} = \frac{\partial X}{\partial x_s}$, and the discrete flow map:\
    \begin{equation}
        X_u(x_s, s, t) = x_s + (t - s) u_\theta^\star(x_s, s, t)
    \end{equation}
    Then, for sufficiently smooth $u^\star_\theta$ and $v$ and for any $(s, t) \in [0, 1]^2$,
    \begin{enumerate}[topsep=0pt, partopsep=0pt, itemsep=3pt, parsep=0pt, leftmargin=*]
        \item $X_u(\cdot, s, t)$ is an invertible map everywhere,
        \item $\log p^{u^\star}_t(x_t) = \log p^{u^\star}_s(x_s) - \log \left | \det \mathbf{J}_{X_u} (x_s) \right |$ almost everywhere.
    \end{enumerate}
\end{restatable}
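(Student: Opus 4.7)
The plan is to reduce both claims to the identity $X_u = X_v$ (the discrete flow map of $u^\star_\theta$ coincides with the continuous flow map of $v$), and then invoke standard ODE theory for $X_v$.

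First, I would exploit the pointwise structure of the squared-error objective in \eqref{eq:average_velocity}. Because the target $\frac{1}{t-s}\int_s^t v(x_\tau,\tau)\, d\tau$ is a deterministic function of $(x_s,s,t)$, any minimizer over a sufficiently rich function class satisfies
\begin{equation*}
u^\star_\theta(x_s,s,t) \;=\; \frac{1}{t-s}\int_s^t v(x_\tau,\tau)\, d\tau
\end{equation*}
for almost every $(x_s,s,t)$ in the support of the training distribution, and, by the assumed smoothness of $u^\star_\theta$ and $v$, this equality extends by continuity to all $(x_s,s,t) \in \mathbb{R}^d \times [0,1]^2$. Multiplying both sides by $(t-s)$ and adding $x_s$ gives $X_u(x_s,s,t) = X_v(x_s,s,t)$ identically, with the $s=t$ case covered by the identity map.

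Next, for claim (1), I would appeal to the classical theory of flows of smooth vector fields. Since $v$ is sufficiently smooth (e.g.\ locally Lipschitz), the map $X_v(\cdot,s,t)$ obtained by integrating the ODE $\dot{x}_\tau = v(x_\tau,\tau)$ from $s$ to $t$ is a $C^1$-diffeomorphism of $\mathbb{R}^d$ for every $(s,t)$, with inverse $X_v(\cdot,t,s)$. Combined with $X_u = X_v$, this yields invertibility of $X_u(\cdot,s,t)$ everywhere.

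For claim (2), the change-of-variables formula for a smooth bijection gives
\begin{equation*}
\log p^{u^\star}_t(x_t) = \log p^{u^\star}_s(x_s) - \log\bigl|\det \mathbf{J}_{X_u}(x_s)\bigr|
\end{equation*}
whenever $x_t = X_u(x_s,s,t)$ and the Jacobian is invertible. The Jacobian $\mathbf{J}_{X_v}$ satisfies the variational equation $\partial_\tau \mathbf{J}_{X_v} = (\partial_{x_\tau} v)\, \mathbf{J}_{X_v}$ with $\mathbf{J}_{X_v}(x_s,s,s)=I$, whose determinant is strictly positive by Liouville's formula, so invertibility of $\mathbf{J}_{X_u}(x_s) = \mathbf{J}_{X_v}(x_s)$ holds everywhere and the formula applies almost everywhere (in fact everywhere except possibly a measure-zero set coming from boundary behaviour of $p^{u^\star}_s$).

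The main obstacle I anticipate is making the jump from the ``minimizer of the expected squared loss'' to pointwise equality $u^\star_\theta = u$: this requires assuming enough capacity that the minimizer is unconstrained, and enough support of the training distribution over $(x_s,s,t)$ that the almost-everywhere equality, combined with smoothness, promotes to equality everywhere. Once this identification is in place the remainder is a direct invocation of standard ODE results for $X_v$, so I would state the capacity/support hypothesis explicitly and keep the flow-theoretic part short.
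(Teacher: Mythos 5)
Your proposal is correct and follows essentially the same route as the paper: identify the minimizer $u^\star_\theta$ with the true average velocity so that $X_u$ coincides with the continuous flow map $X_v$, invoke Picard--Lindel\"of (equivalently, classical flow theory for Lipschitz vector fields) for invertibility, and apply the change-of-variables formula for the likelihood identity. Your added remarks on Liouville's formula and on the capacity/support hypothesis needed to promote the a.e.\ minimizer identity to a pointwise one are refinements of details the paper leaves implicit, not a different argument.
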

\end{mdframed}
We provide a more precise statement and proofs for all propositions in~\cref{app:theory}. This means that \textit{optimal} flow maps are, in some ways, ideal Boltzmann Generators in that they have relatively efficient access to both samples and likelihood;\ however, this property only holds at the optima, and in the case that $X_u(\cdot, s, t) = X_v(\cdot, s, t)$ for all $s, t$, which in practice is extremely challenging to satisfy.

In practice, for standard flow map models, $X_u(\cdot, s, t) \neq X_v(\cdot, s, t)$ and we have no guarantee that $X_u$ will be invertible, making efficient likelihood calculation all but impossible. However, we note that this condition is actually much stronger than we need for \shortname. For our uses, while we would like $X_u(\cdot, s, t)$ to be close to $X_v(\cdot, s, t)$, for accurate and efficient likelihood computation, we only require that $X_u$ is invertible, not that it matches the particular invertible map defined by $X_v$. This leads us to define an additional invertibility loss:
\begin{equation}\label{eq:invertibility_loss}
    \mathcal{L}_{\text{inv}}(\theta) = \mathbb{E}_{s, t, x_s} \| x_s - X_u(X_u(x_s, s, t), t, s)\|^2,
\end{equation}
to be used in conjunction with the average velocity objective and flow matching objectives, $\mathcal{L}_{\text{cfm}}$, for a final loss comprised of three components:\
\begin{equation}\label{eq:loss}
    \mathcal{L}(\theta) = \mathcal{L}_{\text{cfm}}(\theta) + \lambda_{\text{avg}} \mathcal{L}_{\text{avg}}(\theta) + \lambda_{r} \mathcal{L}_{\text{inv}}(\theta),
\end{equation}
with variants $\mathcal{L}_{\text{avg}}$ proposed below. Minimizing this loss has a less strict requirement for the correctness of the Boltzmann Generator specifically:
\begin{mdframed}[style=MyFrame2]
\begin{restatable}{proposition}{invertibility}
    Let $u^\star_\theta$ be a minimizer of $\mathcal{L}_{\text{inv}}$ (\cref{eq:invertibility_loss}) with respect to some $v$. Then, for sufficiently smooth $u^\star_\theta$ and $v$ and for any $(s, t) \in [0, 1]^2$, $X_u(\cdot, s, t)$ is an invertible map everywhere, and $\log p^{u^\star}_t(x_t) = \log p^{u^\star}_s(x_s) - \log \left | \det \mathbf{J}_{X_u} (x_s) \right |$ almost everywhere.
\end{restatable}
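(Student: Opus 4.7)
The plan is to reduce the claim to the standard change-of-variables formula for $C^1$ diffeomorphisms, so the work is entirely in establishing that $X_u(\cdot, s, t)$ is a diffeomorphism. First I would observe that $\mathcal{L}_{\text{inv}}(\theta) \ge 0$ for every $\theta$ and that the minimum value zero is attained, e.g.\ by $u_\theta \equiv 0$ (for which $X_u$ is the identity, trivially self-inverse). Hence any minimizer $u^\star_\theta$ must satisfy $\mathcal{L}_{\text{inv}}(u^\star) = 0$, which in turn gives $X_u(X_u(x_s, s, t), t, s) = x_s$ for almost every $(s, t, x_s)$ under the sampling distribution. Assuming the latter has full support in $x_s$ and continuity in $(s,t)$ (which follow from the smoothness hypothesis on $u^\star_\theta$ and from $p_s > 0$ on $\mathbb{R}^d$, as is the case for the Gaussian base), continuity extends the cycle identity from almost-everywhere to everywhere:\ $X_u(\cdot, t, s) \circ X_u(\cdot, s, t) = \mathrm{id}$ for all $(s, t) \in [0,1]^2$.

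Next I would exploit the symmetry of this identity under swapping $s \leftrightarrow t$. Doing so yields $X_u(\cdot, s, t) \circ X_u(\cdot, t, s) = \mathrm{id}$ as well, so $X_u(\cdot, s, t)$ is a two-sided inverse of $X_u(\cdot, t, s)$ and both maps are bijections of $\mathbb{R}^d$. The smoothness of $u^\star_\theta$ then makes $X_u(\cdot, s, t) = x_s + (t-s) u^\star_\theta(x_s, s, t)$ a $C^1$ map whose compositional inverse is also $C^1$; differentiating the identity $X_u(X_u(x_s, s, t), t, s) = x_s$ via the chain rule gives $\mathbf{J}_{X_u}(X_u(x_s,s,t); t, s) \cdot \mathbf{J}_{X_u}(x_s; s, t) = I$, so $\det \mathbf{J}_{X_u}(x_s; s, t) \neq 0$ everywhere. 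Thus $X_u(\cdot, s, t)$ is a $C^1$ diffeomorphism of $\mathbb{R}^d$.

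With diffeomorphism established, the density transformation is immediate. Since $p^{u^\star}_t$ is by construction the pushforward of $p^{u^\star}_s$ under $X_u(\cdot, s, t)$, the classical change-of-variables formula gives $p^{u^\star}_t(x_t) = p^{u^\star}_s(x_s) / |\det \mathbf{J}_{X_u}(x_s)|$ for $x_t = X_u(x_s, s, t)$, equivalent to the stated log-density identity almost everywhere (with the null set arising only from the measure-theoretic conventions, not from any failure of smoothness).

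The main obstacle is the passage from \emph{almost-everywhere} to \emph{everywhere} for the cycle identity;\ this is the step that secretly uses both the smoothness assumption on $u^\star_\theta$ and the full-support property of the time-$s$ marginal $p_s$ inside the expectation defining $\mathcal{L}_{\text{inv}}$. Once that pointwise identity is in hand, invertibility, Jacobian non-degeneracy, and the likelihood formula all follow from standard diffeomorphism arguments, so I would take care to state these hypotheses explicitly in the formal version deferred to the appendix.
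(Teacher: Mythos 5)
Your proposal is correct and follows essentially the same route as the paper's own (very terse) proof: the minimizer attains zero loss, so $X_u(\cdot, t, s)$ inverts $X_u(\cdot, s, t)$, and the $C^1$ assumption then lets the change-of-variables formula apply almost everywhere. You are in fact more careful than the paper on several points it glosses over — the passage from the almost-everywhere cycle identity to the everywhere identity (which does require full support of $x_s$ and continuity in $(s,t)$), the $s \leftrightarrow t$ symmetry needed to get a two-sided rather than merely left inverse, and the chain-rule argument for $\det \mathbf{J}_{X_u} \neq 0$ — all of which would strengthen the appendix proof.
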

\end{mdframed}
\looseness=-1
Thus, minimizing the invertibility loss is sufficient for valid Boltzmann Generation, even without exactly reproducing the continuous-time flow. Note that the proposition provides a constructive guarantee of invertibility, and, in practice, we only require the existence of an inverse, not its explicit form. This condition ensures that \shortname acts as a consistent generator of the target energy distribution, $\mathcal{E}(x)$, while benefiting from fast inference-time scalability.

\paragraph{\shortname Enables Scalable Architectures.} Previous Boltzmann Generators based on continuous normalizing flows for molecular applications utilize small equivariant architectures \citep{klein2023equivariant,klein_transferable_2024,tan_scalable_2025,aggarwal2025boltznce} \textit{up to 2.3 million parameters}. These models are limited in their scale due to the high cost of inference with multi-step adaptive step size samplers, which are needed to control the error in the likelihood calculation. \shortname, by enabling relatively cheap few-step sampling, can greatly improve performance. Specifically, we use a standard diffusion transformer (DiT)~\citep{peebles_scalable_2023} with an additional time embedding head. We also use a combination of data augmentation to enforce soft SO(3) (rotation) equivariance and subtraction of the mean to enforce translation invariance following~\cite{tan_scalable_2025,tan2025amortizedsamplingtransferablenormalizing}.

\paragraph{Formulations for $\mathcal{L}_{\text{avg}}$ in the context of Boltzmann Generators.} Many forms of $\mathcal{L}_{\text{avg}}$ have been explored in the context of fast generation \citep{geng2025meanflowsonestepgenerative,guo2025splitmeanflowintervalsplittingconsistency,boffi2025buildconsistencymodellearning,sabour2025alignflowscalingcontinuoustime}; We discuss these losses in~\cref{app:other_losses}. In this work, we focus on the following loss, which is equivalent to the MeanFlow loss of \cite{geng2025meanflowsonestepgenerative}, but also explore additive losses~\citep{boffi2025buildconsistencymodellearning} similar to shortcut~\cite{frans2025stepdiffusionshortcutmodels}, split-MeanFlow~\citep{guo2025splitmeanflowintervalsplittingconsistency}. This choice is based on the superior performance of this loss in image experiments \citep{sabour2025alignflowscalingcontinuoustime}, as well as its potential for efficient implementation, which we discuss below.
\begin{equation}\label{eq:l_avg}
    \mathcal{L}_\text{avg} \triangleq \mathbb{E}_{s,t,x_s} \bigg \| u_\theta(x_s,s,t) - \texttt{sg}\bigg( v(x_s, s) - (t-s) \big(v(x_s,s) \partial_{x_s}u_\theta + \partial_s u_\theta \big)\bigg) \bigg \|^2
\end{equation}
Note that since $x_s = sx_1 + (1-s)x_0$, we can directly use $v(x_s,s) = x_1 - x_0$.

\begin{wrapfigure}{r}{0.5\linewidth}
\begin{minipage}[htb]{\linewidth}\vspace{-15px}
\begin{algorithm}[H]
\caption{Training \shortname}
\label{alg:nf_regression}
\KwIn{Sampleable $p_0$ and $p_1$, regularization weight $\lambda_r$, network $u_\theta$}
\KwOut{The trained network $u_\theta$}
\BlankLine
\While{training}{
    $(x_0, x_1) \sim p_0 \times p_1$\;
    $x_s \gets s x_1 + (1 - s) x_0$\;
    $v_s \gets x_1 - x_0$\;
    $u_\theta, \frac{ \partial u_\theta}{ \partial s} \gets \text{jvp}(u_\theta, (x_s, s, t), (v_s, 1, 0))$\;
    $u_{\text{tgt}} \gets v_s - (t - s) \frac{ \partial u_\theta}{ \partial s}$\;
    $\hat{x}_t \gets x_s + (t - s) u_\theta$\;
    $\hat{x}_s \gets \hat{x}_t + (s - t) u_\theta (\hat{x}_t, t, s)$\;
    $\mathcal{L}(\theta) \gets \| u - \text{sg}(u_{\text{tgt}}) \|^2 + \lambda_r \| x_s - \hat{x}_s\|^2$\;
    $\theta \gets \text{update}(\theta, \nabla_\theta \mathcal{L}(\theta))$\;
}
\Return{$u_\theta$}\;
\end{algorithm}
\end{minipage}\vspace{-10px}
\end{wrapfigure}
\paragraph{Efficient Implementation.} As noted in multiple previous works, $\mathcal{L}_{\text{avg}}$ can be efficiently implemented using a single Jacobian vector product (JVP) call using forward automatic differentiation. Specifically, we have:
\begin{equation}
    u_\theta(x_s, s, t),\frac{du_\theta}{ds} = \text{jvp}(u_\theta, (x_s, s, t), (v_s, 1, 0)) \nonumber
\end{equation}
where the $\text{jvp}$ function takes a callable function, inputs, and a vector (the vector part of the JVP).

Additionally, for this loss specifically, we can combine $\mathcal{L}_{\text{cfm}}$ and $\mathcal{L}_{\text{avg}}$ if we implement $v(x_s, s) = u_\theta(x_s, s, s)$, i.e.\ passing the same time to $u$, representing the instantaneous velocity. Specifically, we can implement the sum of the two losses by changing the distribution of $s, t$ in \eqref{eq:l_avg} to include some percentage of the time when $s = t$ to correspond with some fraction of $\mathcal{L}_{\text{cfm}}$ loss.

Our method is the first to require flow maps both in the forward and backward directions. We therefore need to consider the parameterization $u_\theta(x_s, t, s)$, i.e.\ the backwards direction, specifically at the discontinuity when $s=t$. When $t \to s^+$, then $u_\theta(x_s, s, t) = v(x_s, s)$, but when $t \to s^-$, then $u_\theta(x_s, s, t) = -v(x_s, s)$. To address this, we parameterize our flow map $u_\theta$ such that $u_\theta(x_s, s, t) = \text{sign}(t - s) h_\theta(x_s, s, t)$.

\section{Experiments}\looseness=-1
In this section, we first demonstrate that \shortname achieves more scalable performance over state-of-the-art continuous flows across both global and local metrics on tri-alanine, alanine tetrapeptide, and hexa-alanine (\cref{tab:allresults}). Next, we empirically demonstrate that \shortname flows exceed the performance of state-of-the-art discrete NFs, even when they are given vastly larger sampling budgets (\cref{fig:sampling_speeds}). Then, we elucidate the importance of regularization in achieving invertibility and aiding generative performance (\cref{fig:invertibility_regularization}). 
Finally, we ablate inference schedules and show their impact on performance across metrics as a function of sampler choice (\cref{fig:sampler_choice}).

\subsection{Experimental Setup}

\paragraph{\shortname setup.} We evaluate \shortname with two losses, one based on MeanFlow~\citep{geng2025meanflowsonestepgenerative} (\shortname) as described in \cref{sec:falcon}, and the other on an additivity loss (\shortname-A) (\cref{app:other_losses}). We default to the first loss as it performed best or second best on all metrics on larger systems (\cref{tab:allresults}).

\paragraph{Datasets.}\looseness=-1
We evaluate the performance of \shortname on equilibrium conformation sampling tasks, focusing on alanine dipeptide (ALDP), tri-alanine (AL3), alanine tetrapeptide (AL4), and hexa-alanine (AL6). Datasets are obtained from implicit solvent molecular dynamics (MD) simulations with the \texttt{amber-14} force field, as detailed in~\cref{sec:appendix_datasets}. We train on biased data and test on a held-out unbiased dataset, using self-normalized importance sampling (SNIS) and force-field energies to compute log-likelihoods and re-sample molecules from the target Boltzmann density.

\paragraph{Baselines.}\looseness=-1
We benchmark \shortname against both discrete and continuous normalizing flows. We include four discrete normalizing flow baselines:\ (\textbf{1}) SE$(3)$-EACF \citep{midgley2022flow};\ (\textbf{2}) RegFlow~\citep{rehman2025fortforwardonlyregressiontraining};\ (\textbf{3}) SBG~\citep{tan_scalable_2025} with standard SNIS (SBG IS);\ and (\textbf{4}) SBG with SMC sampling (SBG SMC), as well as three continuous flows:\ (\textbf{1}) ECNF~\citep{klein2023equivariant};\ (\textbf{2}) ECNF++~\citep{tan_scalable_2025};\ and (\textbf{3}) BoltzNCE~\citep{aggarwal2025boltznce}, a recent SE$(3)$-equivariant architecture leveraging geometric vector perceptrons (GVPs) \citep{jing2020learning} on alanine dipeptide. For all continuous flows, samples and likelihoods are generated by integrating over the vector field using the Dormand--Prince 4(5) integrator with $\text{atol}=\text{rtol}=10^{-5}$ \citep{dormand1986runge} to ensure a fair comparison between methods. More details on architectures and parameters are covered in~\cref{sec:model_details}.

\paragraph{Metrics.}\looseness=-1
We report Effective Sample Size (ESS), and the $2$-Wasserstein distance on both the energy distribution (\emetric), and dihedral angles (\torusmetric). The full definitions of the metrics are included in~\cref{sec:metric_definitions}. The energy captures local details, as minor atomic displacements yield large variations in the energy distribution, while \torusmetric captures global structure via mode coverage across metastable states. We include energy histograms in the main text, with Ramachandran plots relegated to~\cref{sec:ramachandran_figures}. For robustness, all quantitative experiments are performed on three seeds of the model and reported as mean $\pm$ standard deviation in the tables and figures. For all benchmarks, in cases where dashes are present, data was unavailable, except for SBG SMC \citep{tan_scalable_2025}, where ESS is not a valid metric.

\begin{wraptable}{r}{0.5\linewidth}\vspace{-12px}
\centering
\caption{Results on alanine dipeptide. Best results are \textbf{bolded}, with second-best \underline{underlined}. }\vspace{-8px}
\label{tab:aldp_results}
\resizebox{0.5\columnwidth}{!}{
\begin{tabular}{lccc}
    \toprule
    & \multicolumn{3}{c}{Alanine dipeptide (ALDP)} \\
    \cmidrule(lr){2-4}
    Algorithm $\downarrow$ & ESS $\uparrow$ & \emetric $\downarrow$ & \torusmetric $\downarrow$ \\
    \midrule
    BoltzNCE & --- & 0.27 $\pm$ 0.02 & 0.57 $\pm$ 0.00 \\
    SE$(3)$-EACF & $<10^{-3}$ & 108.202 & 2.867 \\
    {RegFlow} & {0.036} & {0.519} & {0.958} \\
    ECNF & \underline{0.119} & \underline{0.419} & 0.311 \\
    ECNF++  & \textbf{0.275 $\pm$ 0.010} &	0.914 $\pm$ 0.122& \underline{0.189 $\pm$ 0.019} \\
    SBG IS & 0.030 $\pm$ 0.012 & 0.873 $\pm$ 0.338 & 0.439 $\pm$ 0.129 \\
    SBG SMC & --- & 0.741 $\pm$ 0.189 & 0.431 $\pm$ 0.141 \\
    \rowcolor{BlueViolet!15}\shortname-A (Ours) & 0.097 $\pm$ 0.007 & {0.512 $\pm$ 0.038} & \textbf{0.180 $\pm$ 0.005} \\
    \rowcolor{BlueViolet!15}\shortname (Ours) & 0.067 $\pm$ 0.013 & \textbf{0.225 $\pm$ 0.104} & 0.402 $\pm$ 0.021 \\
    \bottomrule
\end{tabular}}
\vspace{-5mm}
\end{wraptable}
\subsection{\shortname Outperforms State-of-the-Art Methods}\looseness=-1
\paragraph{Superior Scalability Over Continuous Flows.} Computing likelihoods in CNFs is computationally prohibitive, limiting their scalability in the Boltzmann Generator setting. Although the current state-of-the-art, ECNF++, performs exceptionally well on ESS and \torusmetric for alanine dipeptide (see~\cref{tab:aldp_results}) \citep{tan_scalable_2025}, it fails to scale to larger molecules, as seen in~\cref{tab:allresults}. In contrast, for larger systems---tri-alanine, alanine tetrapeptide, and hexa-alanine---\shortname substantially outperforms ECNF++ across all metrics, demonstrating superior scalability to larger molecular systems. The true MD energy distributions, learned proposals, and re-sampled energies for alanine dipeptide, tri-alanine, alanine tetrapeptide, and hexa-alanine are all shown in~\cref{fig:optimalperformance}. 

\begin{table*}[htb]
\centering
\caption{Quantitative results on tri-alanine (AL3), alanine tetrapeptide (AL4), and hexa-alanine (AL6). Baseline methods presented with SNIS, unless stated otherwise. Evaluations are conducted over $10^4$ samples.}
\label{tab:allresults}
\resizebox{\textwidth}{!}{
\begin{tabular}{@{}lccccccccc}
    \toprule
    & \multicolumn{3}{c}{Tri-alanine (AL3)} & \multicolumn{3}{c}{Tetrapeptide (AL4)} & \multicolumn{3}{c}{Hexa-alanine (AL6)} \\
    \cmidrule(lr){2-4}\cmidrule(lr){5-7}\cmidrule(lr){8-10}
    Algorithm $\downarrow$ & ESS $\uparrow$ & $\mathcal{E}$-$\mathcal{W}_2$ $\downarrow$ & $\mathbb{T}$-$\mathcal{W}_2$ $\downarrow$ 
    & ESS $\uparrow$ & $\mathcal{E}$-$\mathcal{W}_2$ $\downarrow$ & $\mathbb{T}$-$\mathcal{W}_2$ $\downarrow$
    & ESS $\uparrow$ & $\mathcal{E}$-$\mathcal{W}_2$ $\downarrow$ & $\mathbb{T}$-$\mathcal{W}_2$ $\downarrow$ \\
    \midrule
    ECNF++ & 0.003 $\pm$ 0.002 & 2.206 $\pm$ 0.813 & 0.962 $\pm$ 0.253 
           & 0.016 $\pm$ 0.001 & 5.638 $\pm$ 0.483 & 1.002 $\pm$ 0.061
           & 0.006 $\pm$ 0.001 & 10.668 $\pm$ 0.285& 1.902 $\pm$ 0.055 \\
    {RegFlow} & {0.029}            & {1.051}             & {1.612}   
           & {0.010}             & {6.277}             & {3.476} 
           &  ---              & ---               & ---  \\
    SBG IS & 0.052 $\pm$ 0.013 & 0.758 $\pm$ 0.506 & 0.502 $\pm$ 0.016
           & 0.046 $\pm$ 0.014 & 1.068 $\pm$ 0.495 & \underline{0.969 $\pm$ 0.067}
           & 0.034 $\pm$ 0.015 & \underline{1.021 $\pm$ 0.239} & 1.431 $\pm$ 0.085 \\
    SBG SMC & --- & \underline{0.598 $\pm$ 0.084} & 0.503 $\pm$ 0.029
            & --- & \underline{1.007 $\pm$ 0.382} & 1.039 $\pm$ 0.069
            & --- & 1.189 $\pm$ 0.357 & 1.444 $\pm$ 0.140 \\
    \rowcolor{BlueViolet!15}\shortname-A (Ours) & \textbf{0.104 $\pm$ 0.004}    & 1.385 $\pm$ 0.182 & \textbf{0.343 $\pm$ 0.004}
                        & \textbf{0.094 $\pm$ 0.007}    & 2.929 $\pm$ 0.068 & 1.094 $\pm$ 0.034
                        & \textbf{0.077 $\pm$ 0.007}    & 1.211 $\pm$ 0.105 & \textbf{1.163 $\pm$ 0.112} \\
    \rowcolor{BlueViolet!15}\shortname (Ours) & \underline{0.077 $\pm$ 0.004} & {\textbf{0.544 $\pm$ 0.013}} & \underline{0.452 $\pm$ 0.011}
                        & \underline{0.055 $\pm$ 0.003} & {\textbf{0.686 $\pm$ 0.047}} & {\textbf{0.858 $\pm$ 0.077}}
                        & \underline{0.060 $\pm$ 0.017} & {\textbf{0.892 $\pm$ 0.311}} & \underline{1.256 $\pm$ 0.132} \\
    \bottomrule
\end{tabular}}
\end{table*}

\begin{figure}[htb]
\includegraphics[width=\textwidth]{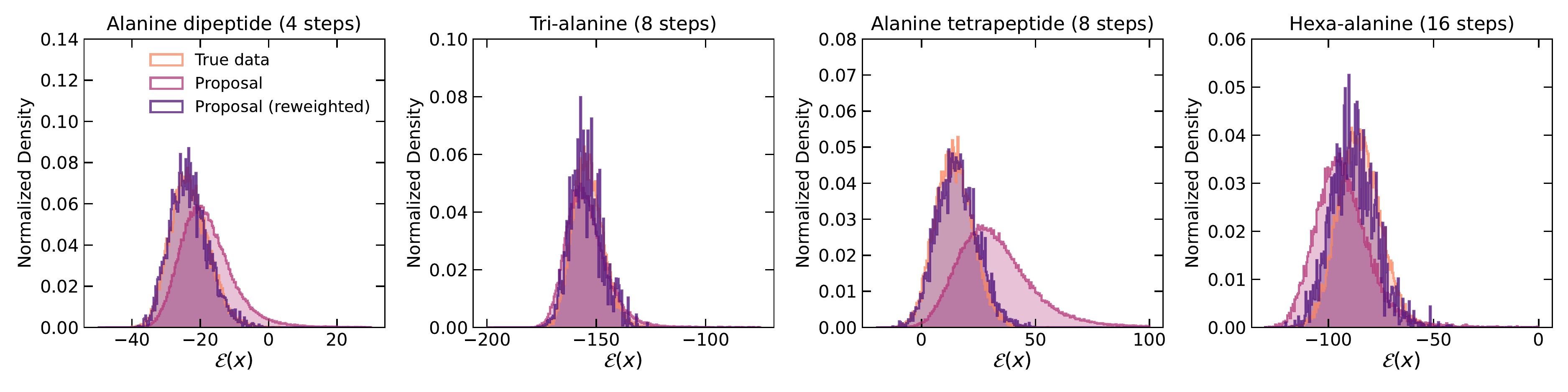}
\caption{True MD energy distribution with best \shortname unweighted and re-sampled proposals for alanine dipeptide (\textbf{left}), tri-alanine (\textbf{center left}), and alanine tetrapeptide (\textbf{center right}), and hexa-alanine (\textbf{right}).}
\label{fig:optimalperformance}
\end{figure}

\paragraph{Improved Sample Quality Over Discrete Flows.} Discrete NFs have recently been shown to be highly performant Boltzmann Generators \citep{rehman2025fortforwardonlyregressiontraining,tan_scalable_2025,tan2025amortizedsamplingtransferablenormalizing}. SBG \citep{tan_scalable_2025}, based on the TARFlow architecture \citep{zhai_normalizing_2024}, outperforms all previously reported methods across both global and local metrics (see~\cref{tab:allresults}). Here, we demonstrate that \shortname, even in a few steps, can outperform SBG across all metrics. We make two main claims to assert \shortname's competitive advantage in comparison with discrete NFs:\
(\textbf{1}) Discrete NFs, despite being fast one-step generators, consistently underperform compared to \shortname across all global and local metrics (\cref{tab:allresults});\ and (\textbf{2}) Increasing the number of samples can partially close this gap;\ however, even with $5\times10^6$ samples---250$\times$ more than those used to evaluate \shortname---SBG's performance on \emetric remains significantly worse than that of a 4-step \shortname Flow (as demonstrated in~\cref{fig:sampling_speeds}).

\begin{wrapfigure}{r}{6cm}
\vspace{-10px}
\centering
\includegraphics[width=0.43\textwidth]{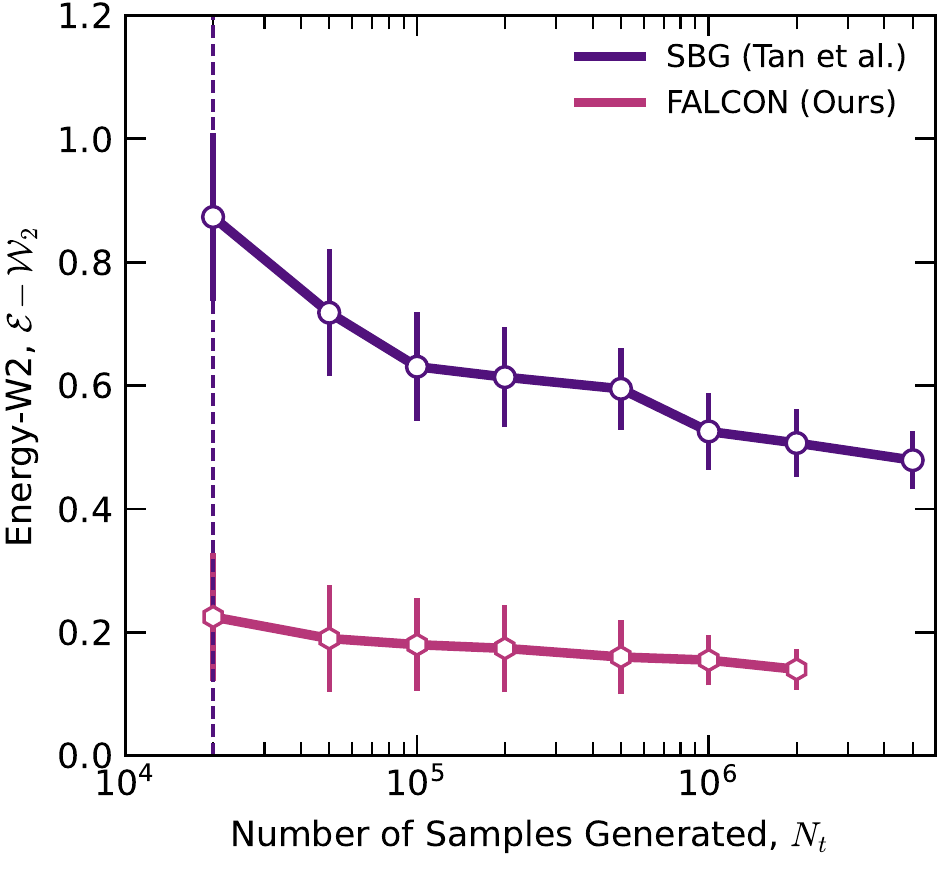}\vspace{-10px}
\caption{Performance with additional samples.}
\vspace{-10px}
\label{fig:sampling_speeds}
\end{wrapfigure}
\subsection{Analysis of Computational Efficiency}

\paragraph{Training Efficiency Compared to Discrete NFs.}
Discrete NFs benefit from fast inference, but are slow and unstable to train due to the maximum likelihood objective~\citep{xu2023embracing,andrade2024stable}.
By contrast, CNFs trained with a flow matching objective trade more stable and faster training for slower inference. When considering both training and evaluation time together (\cref{tab:training_inference_speeds}), we see that \shortname---despite being marginally slower at inference than the discrete NFs for the same number of samples---achieves faster cumulative training + inference times for superior performance due to the expedited training objective.

\paragraph{Inference Efficiency Compared to CNFs.}
A primary contribution of \shortname is the dramatic reduction in the computational cost required to achieve high-quality samples with accurate likelihoods. \cref{fig:performancevtime} directly illustrates this advantage. To reach a comparable level of performance on the \torusmetric metric for alanine dipeptide, a traditional CNF requires inference times that are two orders of magnitude longer than \shortname. This efficiency gain is what enables the use of more expressive alternative architectures and makes large-scale molecular sampling practical. For additional discussion, see~\cref{app:performance_vs_nfe}.

\begin{table}[h]
\centering
    \caption{Cumulative training + inference time across flows. $10^4$ samples evaluated with ($\text{atol} = \text{rtol} = 10^{-5}$ for our CNF and 4-step \shortname). All experiments were conducted on one NVIDIA L40S with batch size $1024$. Note:\ For hexa-alanine, obtaining $10^4$ samples from the Dopri5-integrated CNF was computationally infeasible.}
    \label{tab:scaling_parameters}
    \resizebox{0.65\columnwidth}{!}{
    \begin{tabular}{lccccc}
        \toprule
         & ECNF++ & SBG & DiT CNF (Ours) & \shortname (Ours)\\
        \midrule
        Alanine dipeptide & 12.52 & 16.83 & 9.56 & 7.65 \\
        Tri-alanine & 19.59 &  24.67 & 17.54 & 11.45 \\
        Alanine tetrapeptide & 32.17 &  41.67 & 24.10 & 18.84 \\
        Hexa-alanine & 137.4 &  57.50 &  82.10 & 25.76  \\
        \bottomrule
    \end{tabular}
    }
    \label{tab:training_inference_speeds}
\end{table}

\begin{table*}[!h]
\centering
\caption{\small Quantitative results on alanine dipeptide, tri-alanine, alanine tetrapeptide, and hexa-alanine compared to our Dopri5-integrated CNFs. Evaluations were conducted over $10^4$ points across methods. Note:\ For hexa-alanine, obtaining $10^4$ samples from the Dopri5-integrated CNF was computationally infeasible.}
\small
\label{tab:continuousresults}
\vspace{-5pt}
\resizebox{0.7\columnwidth}{!}{
\begin{tabular}{@{}llcccc@{}}
    \toprule
    System & Algorithm $\downarrow$ & ESS $\uparrow$ & $\mathcal{E}$-$\mathcal{W}_2$ $\downarrow$ & $\mathbb{T}$-$\mathcal{W}_2$ $\downarrow$ & NFE \\
    \midrule
    \multirow{2}{*}{Alanine dipeptide} 
      & \shortname-Dopri5 & 0.264 $\pm$ 0.058 & 0.442 $\pm$ 0.048 & 0.218 $\pm$ 0.023 & 257 \\
      & \shortname        & 0.067 $\pm$ 0.013 & 0.225 $\pm$ 0.104 & 0.402 $\pm$ 0.021 & 4 \\
    \midrule
    \multirow{2}{*}{Tri-alanine} 
      & \shortname-Dopri5 & 0.125 $\pm$ 0.034 & 0.382 $\pm$ 0.053 & 0.370 $\pm$ 0.093 & 265 \\
      & \shortname        & 0.077 $\pm$ 0.004 & 0.544 $\pm$ 0.013 & 0.452 $\pm$ 0.011 & 8 \\
    \midrule
    \multirow{2}{*}{Alanine tetrapeptide} 
      & \shortname-Dopri5 & 0.129 $\pm$ 0.015 & 0.665 $\pm$ 0.047 & 0.640 $\pm$ 0.093 & 200 \\
      & \shortname        & 0.055 $\pm$ 0.003 & 0.686 $\pm$ 0.047 & 0.858 $\pm$ 0.077 & 8 \\
    \midrule
        \multirow{2}{*}{Hexa-alanine} 
      & \shortname-Dopri5 & 0.128 $\pm$ 0.031 & 1.013 $\pm$ 0.115 & 1.320 $\pm$ 0.201 & 207\\
      & \shortname        & 0.060 $\pm$ 0.017 & 0.892 $\pm$ 0.311 & 1.256 $\pm$ 0.132 & 16 \\
    \bottomrule
\end{tabular}
}
\end{table*}

\paragraph{Inference vs.\ Accuracy Trade-off.} 
In \shortname, by using a flow map formulation, we can trade off performance for faster evaluation by adjusting the number of inference steps post-hoc. In~\cref{tab:continuousresults}, we show that a high-NFE, adaptive step solver achieves superior performance to the state-of-the-art continuous time ECNF++, as well as a few-step \shortname Flow;\ however, we also demonstrate that in this few-step regime, \shortname still outperforms every method considered (see~\cref{tab:allresults}) using two orders of magnitude fewer function evaluations (ranging between 4-16 steps, depending on the dataset). Depending on the compute budget and goal, we demonstrate in~\cref{fig:coarsegraining} how \shortname is able to interpolate between slow and accurate sampling with fast but less accurate sampling.

\begin{figure}[!h]
\includegraphics[width=\textwidth]{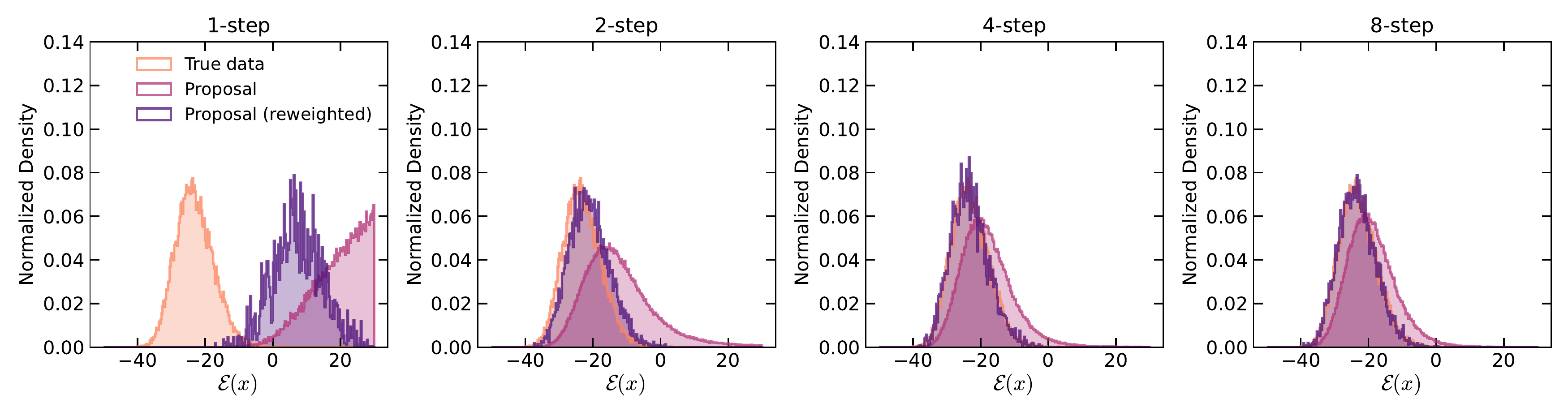}
\vspace{-20pt}
\caption{Improved proposal and re-weighted sample energies with increased steps for alanine dipeptide.}
\label{fig:coarsegraining}
\end{figure}

\begin{wrapfigure}{r}{8.5cm}
\centering\vspace{-16px}
\includegraphics[width=0.62\textwidth]{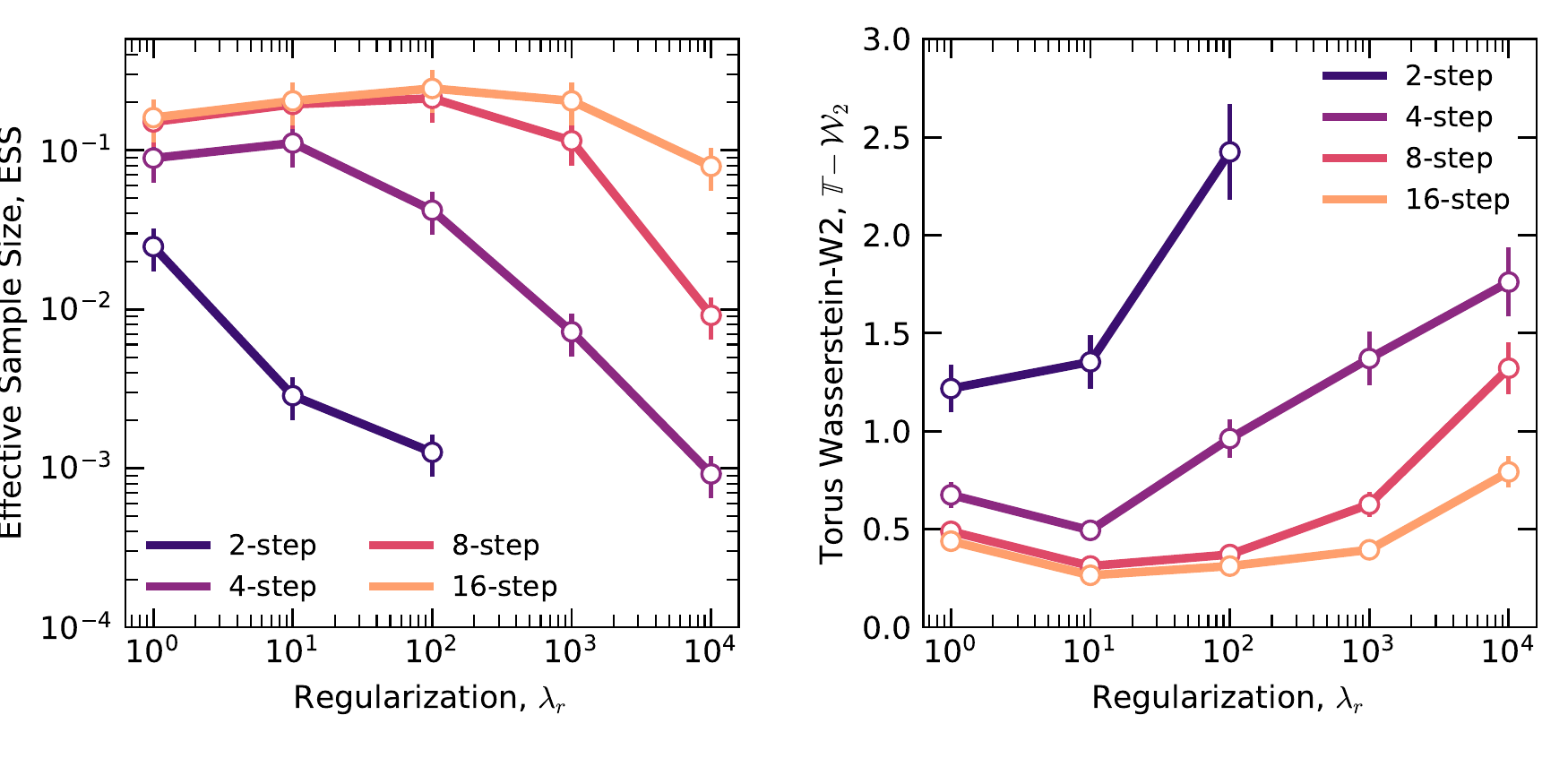}\vspace{-15px}
\caption{Performance trade-off with increasing regularization.}\vspace{-15px}
\label{fig:invertibility_regularization}
\end{wrapfigure}

\subsection{Ablation Studies and Design Choices}
\paragraph{Verifying \shortname's Invertibility.} As CNFs are only invertible at convergence, we introduce a regularization term in the loss to promote numerical invertibility in the few-step regime. In~\cref{fig:invertibility_regularization}, we demonstrate the trade-off from this term on both the ESS and $2$-Wasserstein distance on dihedral angles:\ weak regularization leads to poor invertibility and degraded performance, whereas strong regularization enforces flow invertibility, albeit at the cost of reduced sample quality. We fix the regularization constant to $10.0$ for all experiments performed, unless stated otherwise to balance performance and numerical invertibility.

We also directly prove that an inverse exists for our trained flow, by training an auxiliary network to invert a frozen \shortname Flow in the forward direction. We find that \shortname achieves invertibility errors on the order of $10^{-4}$, which is the same order of magnitude as the invertibility of discrete and continuous NFs. Additional details can be found in~\cref{sec:proof_of_invert} and~\cref{fig:invertibility}.

\newpage\clearpage
\begin{wrapfigure}{r}{7cm}
\centering\vspace{-5px}
\includegraphics[width=0.48\textwidth]{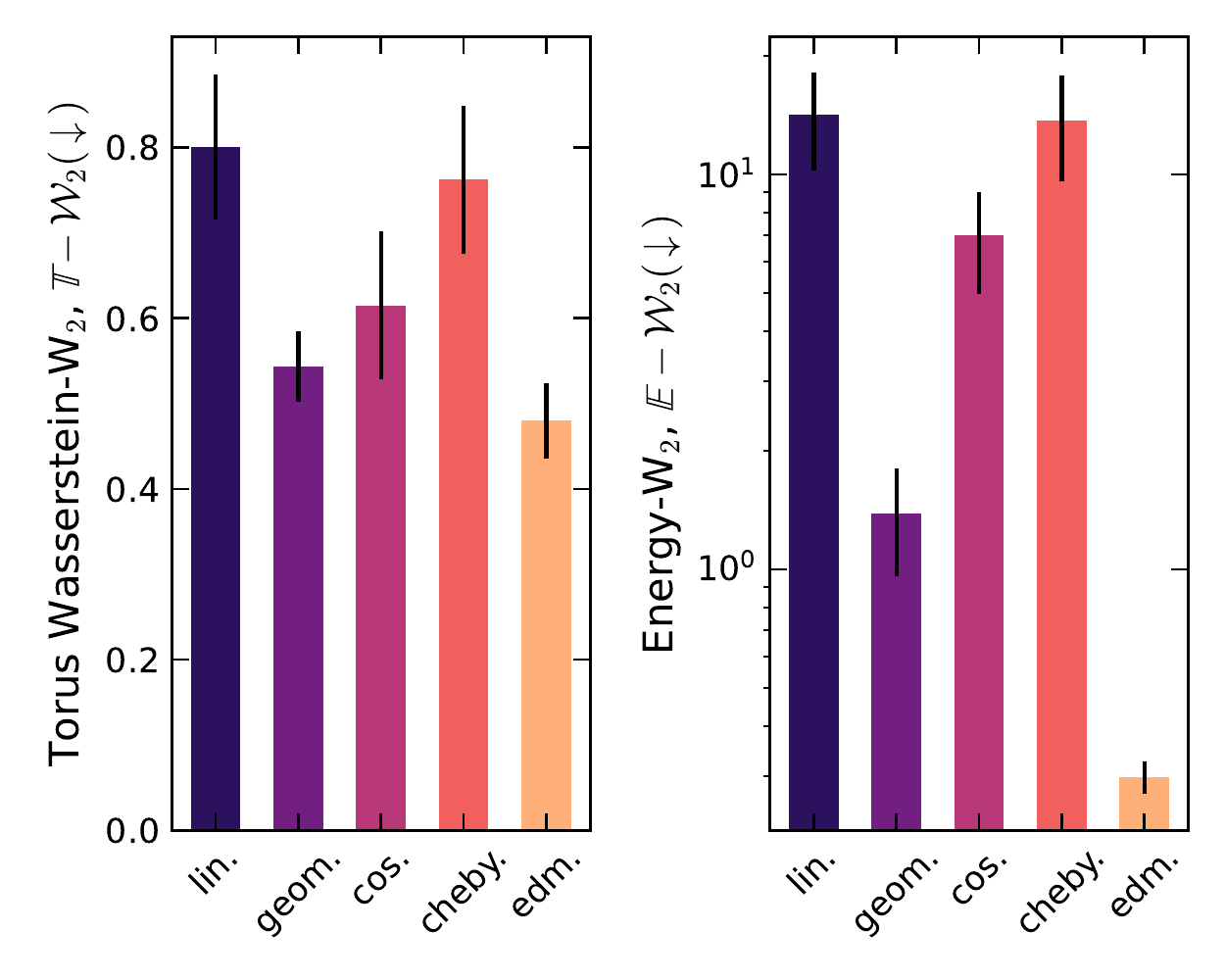}\vspace{-15px}
\caption{Performance vs.\ choice of inference schedule.}
\label{fig:sampler_choice}
\vspace{-20pt}
\end{wrapfigure}

\paragraph{Impact of Inference Schedules.} In the few step regime, performance can be significantly impacted by the choice of inference schedule. We run ablations on various schedules for alanine dipeptide with 8-steps, summarizing the results in~\cref{fig:sampler_choice}. We note that EDM substantially outperforms all other schedulers, in agreement with observations from the diffusion literature \mbox{\citep{karras2022elucidating}};\ sampling more points near the data distribution proves beneficial in aiding generative performance as the variance of the flow field is higher closer to the target distribution. For all reported results, we use the EDM scheduler. In~\cref{sec:schedulers}, we provide additional details regarding scheduler definitions, inference setups, and parameter selection for EDM. 

\section{Related Work}
\paragraph{Boltzmann Generators.}Boltzmann Generators (BGs)~\citep{noe2019boltzmann} are used to sample molecular conformations~\citep{klein2023equivariant} and enable consistent estimates of thermodynamic observables~\citep{wirnsberger2020targeted, rizzi2023multimap, schebek2024efficient}. While traditionally BGs are based on discrete normalizing flows, more recent work in machine learning makes use of more powerful continuous normalizing flow architectures for invariance~\citep{kohler2020equivariant,kohler_rigid_2023} and expressive power~\citep{klein2023equivariant,klein_transferable_2024}. A few other works have explored the usage of approximate likelihoods~\citep{draxler2024free,sorrenson2024liftingarchitecturalconstraintsinjective,aggarwal2025boltznce}, but have until now been unable to scale. \citet{rehman2025fortforwardonlyregressiontraining} also proposes a more efficient BG using a new-regression-based objective to train discrete normalizing flow architectures, but the approach necessitates an invertible architecture limiting scalability and performance.

\looseness=-1
\paragraph{Few-step Flows.} Diffusion and flow matching methods are now central to domains from vision \citep{song2020score,lipman_flow_2022} to scientific applications in material and drug discovery \citep{abramson_accurate_2024,noe2019boltzmann}. Scalable regression-based losses make these models fast to train, yet inference remains costly due to the need for numerous vector field integrations, motivating efforts to reduce computational expense. One- and few-step methods, like consistency models \citep{song2023consistencymodels,song2023improved,geng2024consistency}, shortcut models \citep{frans2025stepdiffusionshortcutmodels}, and flow maps \citep{boffi2025buildconsistencymodellearning,boffi2025flow,sabour2025alignflowscalingcontinuoustime,guo2025splitmeanflowintervalsplittingconsistency} have gained recent attention. Concurrent work demonstrates that approximate likelihoods can be obtained through direct regression and are useful in guiding few step flows~\citep{ai2025jointdistillationfastlikelihood}. Although there have been numerous efforts in improving generative performance in image settings, developing invertible few-step flows for scientific applications has seen substantially less interest. Our work, to the best of our knowledge, is the first demonstration of an invertible few-step flow with tractable likelihoods.

\section{Conclusion}
In this work, we introduced \longname, a novel few-step flow-based generative model designed to address the long-standing challenge of scalable and efficient Boltzmann distribution sampling. Our approach successfully combines the expressiveness and training efficiency of modern flow-based models with a few-step sampling capability for fast yet accurate likelihood estimation. By leveraging a hybrid training objective, \shortname provides a practical solution for the computationally expensive likelihood evaluations that have historically limited the widespread adoption of Boltzmann Generators. 

Our empirical results demonstrate that \shortname not only outperforms the existing state-of-the-art discrete normalizing flow models, but also provides a significant leap in computational efficiency over previous continuous flow models. We showed that our model is \emph{two orders of magnitude faster} than an equivalently performing CNF-based Boltzmann Generator, making real-world, molecular sampling applications significantly more feasible. This represents a critical step toward unlocking the potential of Boltzmann Generators in fields ranging from drug discovery to materials science.

\newpage\clearpage
\paragraph{Limitations.} Despite its advancements, \shortname has several key limitations that are crucial to acknowledge. First, while our results demonstrate that the computed likelihoods are \textit{empirically good enough} for practical applications, we cannot efficiently guarantee their theoretical correctness. This represents a trade-off between computational efficiency and absolute theoretical certainty. Additionally, while theoretically possible, achieving true one-step generation remains out of reach for our current models, and we believe further architectural improvements and training methodologies are necessary to fully realize this potential.

Finally, our current research has primarily focused on the application of \shortname to Boltzmann Generation in molecular conformation sampling. Future work will explore applying our approach to Bayesian inference, robotics, and other domains where rapid and accurate likelihood estimation is critical. We also see potential in models with structured Jacobians~\citep{rezende2015variational,dinh2016density,zhai_normalizing_2024,kolesnikov2024jet} to facilitate even faster sampling.

\section*{Acknowledgements}
DR received financial support from the Natural Sciences and Engineering Research Council's (NSERC) Banting Postdoctoral Fellowship under Funding Reference No.\ 198506. The authors acknowledge funding from UNIQUE, CIFAR, NSERC, Intel, and Samsung. The research was enabled in part by computational resources provided by the Digital Research Alliance of Canada (\url{https://alliancecan.ca}), Mila (\url{https://mila.quebec}), AITHYRA (\url{https://www.oeaw.ac.at/aithyra}), and NVIDIA.

\section*{Ethics Statement}\looseness=-1
Our work is primarily focused on theoretical algorithmic development for faster and more accurate generative models for sampling from Boltzmann densities, with reduced focus on experimental implementation. However, we recommend that future users of our work exercise appropriate caution when applying it to domains that may involve sensitive considerations.

\section*{Reproducibility Statement}\looseness=-1
We undertake multiple measures to ensure the reproducibility of our work. A dedicated section in~\cref{sec:figure_details} outlines the setup required to generate each of our reported figures. Further, we provide comprehensive information on the MD datasets used to train our models, including simulation parameters as well as the training, validation, and test splits used. We also include a separate section detailing model configurations, learning rate schedules, optimizer settings, hyperparameter choices, and other relevant aspects to facilitate reproduction in~\cref{sec:model_details}. We will also release all developed code publicly upon acceptance.

\newpage\clearpage
\bibliography{main}
\bibliographystyle{style/iclr2026_conference}

\newpage
\appendix
\section*{Appendix}

\section{Theory}\label{app:theory}
\avgvelocity*
We first define what we mean by sufficiently smooth in both propositions.

\begin{assumption}\label{assum:u}
    $u^\star_\theta$ is $C^1$ almost everywhere.
\end{assumption}
This assumption on $u^\star_\theta$ allows the application of the change of variables formula almost everywhere, which is necessary to compute the log likelihood under $u$. We note that this is satisfied by most sufficiently expressive modern architectures including those using ReLU type activations, which are not $C^1$ everywhere, but are almost everywhere.

\begin{assumption}
    $v$ is uniformly Lipschitz continuous in $x$ and continuous in $t$.
\end{assumption}
This assumption on $v$ is necessary in order to satisfy the Picard-Lindel\"of Theorem. We note that if the Lipschitz condition does not hold then the Peano existence theorem implies that the initial value problem (IVP) of the ODE may not be invertible. If, however, we have both continuity in $x$ and $t$ as well as a Lipschitz condition for $v$ on $x$, then by the Picard-Lindel\"of Theorem, we have existence and uniqueness of the IVP. We first recall Picard-Lindel\"of.
\begin{theorem}[Picard-Lindel\"of]
    Let $D \subset \R^d \times \R$ be a closed rectangle with $(x_0, t_0) \in \text{int}\ D$, the interior of $D$. Let $v: D \to \R^d$ be a function that is continuous in $t$ and Lipschitz continuous in $x$ (with Lipschitz constant independent from $t$). Then there exists some $\epsilon > 0$ such that the initial value problem is:\
    \begin{equation}
        \frac{dx}{dt} = v(x_t, t), \quad x(t_0) = x_0
    \end{equation}
    has a unique solution $x(t)$ on the interval $[t_0 - \epsilon, t_0 + \epsilon]$.
\end{theorem}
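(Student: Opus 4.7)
The plan is to prove Picard-Lindel\"of via the classical Banach fixed point argument applied to the Picard integral operator. The first step is to convert the differential statement into an equivalent integral equation: by the fundamental theorem of calculus, a continuous $x$ on an interval $I \ni t_0$ solves the IVP if and only if
\begin{equation}
    x(t) = x_0 + \int_{t_0}^{t} v(x(s), s)\, ds \quad \text{for all } t \in I.
\end{equation}
This reformulation trades a differentiability requirement for a fixed-point problem on a space of continuous functions, which is easier to attack with metric arguments.

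The second step is to set up an appropriate complete metric space and a suitable $\epsilon$. Since $(x_0, t_0)$ is interior to $D$, I can choose $a, b > 0$ with $R := [t_0 - a, t_0 + a] \times \overline{B}(x_0, b) \subset D$. By continuity of $v$ on the compact set $R$, $M := \sup_{R} \|v\|$ is finite, and by hypothesis there is a Lipschitz constant $L$ for $v$ in $x$ on $R$, uniform in $t$. Set $\epsilon := \min\!\bigl(a,\ b/M,\ 1/(2L)\bigr)$, and let $X$ be the set of continuous $\varphi : [t_0 - \epsilon, t_0 + \epsilon] \to \R^d$ with $\varphi(t_0) = x_0$ and $\sup_t \|\varphi(t) - x_0\| \le b$, endowed with the sup norm. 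As a closed subset of a Banach space, $X$ is complete.

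The third step is to analyze the Picard operator $(T\varphi)(t) := x_0 + \int_{t_0}^{t} v(\varphi(s), s)\, ds$. I would verify $T : X \to X$ by noting $T\varphi$ is continuous, $(T\varphi)(t_0) = x_0$, and $\|(T\varphi)(t) - x_0\| \le |t - t_0|\, M \le \epsilon M \le b$. I would then verify the contraction property: for $\varphi, \psi \in X$ and any $t$ in the interval, the Lipschitz condition gives
\begin{equation}
    \|(T\varphi)(t) - (T\psi)(t)\| \;\le\; \int_{t_0}^{t} L\, \|\varphi(s) - \psi(s)\|\, ds \;\le\; L\epsilon\, \|\varphi - \psi\|_\infty,
\end{equation}
so $T$ is a contraction with factor $L\epsilon \le 1/2 < 1$. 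The Banach fixed point theorem then furnishes a unique $x \in X$ with $Tx = x$, which by the reformulation is the unique solution of the IVP in $X$.

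The final step is to upgrade uniqueness in $X$ to uniqueness among all continuous solutions on $[t_0 - \epsilon, t_0 + \epsilon]$. Given any two such solutions $x_1, x_2$, subtracting their integral forms and applying the Lipschitz bound yields $\|x_1(t) - x_2(t)\| \le L \bigl|\!\int_{t_0}^t \|x_1(s) - x_2(s)\|\, ds\bigr|$, and Gr\"onwall's inequality forces $x_1 \equiv x_2$. The main (minor) obstacle is the triple constraint on $\epsilon$ --- it must be small enough that the image $T\varphi$ stays in the ball of radius $b$ (so that $v$ remains defined and Lipschitz), that $T$ is contractive, and that the time interval remains inside $D$; these are simultaneously satisfied by the choice $\epsilon = \min(a, b/M, 1/(2L))$ above.
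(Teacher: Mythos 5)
Your argument is the standard contraction-mapping proof of Picard--Lindel\"of and it is correct: the integral reformulation, the choice of the rectangle $R \subset D$ (using that $(x_0,t_0)$ is interior), the bound $M$ (which exists because continuity in $t$ plus uniform Lipschitz continuity in $x$ gives joint continuity, hence boundedness on the compact $R$), the choice $\epsilon = \min(a,\, b/M,\, 1/(2L))$, the verification that the Picard operator maps the closed ball $X$ into itself and is a $1/2$-contraction, and the Gr\"onwall upgrade from uniqueness in $X$ to uniqueness among all continuous solutions (which works because the hypothesis gives the Lipschitz constant on all of $D$, so any solution whose graph lies in $D$ is covered). Note, however, that the paper does not prove this statement at all: it is recalled as a classical theorem and used as a black box in the proofs of the paper's propositions (to guarantee existence, uniqueness, and hence invertibility of the flow map $X_v$). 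So there is no in-paper proof to compare against; your write-up simply supplies the standard textbook argument, with only cosmetic loose ends (e.g.\ the degenerate case $M=0$, where the constraint $b/M$ can be dropped or the solution is constant).
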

Under these conditions on $v$ we are now able to prove the proposition.
\begin{proof}
    Recall that if $u_\theta^\star$ minimizes \eqref{eq:average_velocity}:
    \begin{equation}\label{eq:average_velocity_app}
    \mathbb{E}_{s, t, x_s} \left [w (s, t) \| u_\theta(x_s, s, t) -  \frac{1}{t - s} \int_s^t v(x_\tau, \tau) d \tau \|^2\right ] \nonumber
\end{equation}
then we have that $u_\theta^\star(x_s, s, t) = \frac{1}{t - s} \int_s^t v(x_\tau, \tau) d \tau $ for all $s,t \in [0,1]^2$. Furthermore we have
\begin{align}
    X_u(x_s, s, t) &= x_s + (t - s) u_\theta^\star(x_s, s, t) \\
    &= x_s + (t - s) \frac{1}{t - s} \int_s^t v(x_\tau, \tau) d \tau \\
    &= x_s + \int_s^t v(x_\tau, \tau) d \tau \label{eq:ivp}
\end{align}
which by application of the Picard-Lindel\"of theorem is invertible for all $s, t$, proving part 1 of the proposition. 

To prove part 2, we note that \eqref{eq:ivp} is differentiable with respect to $x_s$ and $X_s$ is invertible, therefore by the change-of-variables formula and Assumption~\ref{assum:u}, we arrive at part 2 of the proposition.
\end{proof}
\invertibility*
\begin{proof}
    First we recall \eqref{eq:invertibility_loss}:\
    \begin{equation}
    \mathcal{L}_{\text{inv}} = \mathbb{E}_{s,t,x_s} \| x_s - X_u(X_u(x_s, s, t), t, s) \|^2
    \end{equation}
    In this case, the minimizer of $\mathcal{L}_{\text{inv}}$ that is also in $C^1$ is pointwise invertible by definition as $X_u(\cdot, t, s)$ is the inverse of $X_u(\cdot, s, t)$ as $\mathcal{L}_{\text{inv}} \to 0$ everywhere. 

    For the change of variables to apply, we also need Assumption~\ref{assum:u} to apply almost everywhere. 
\end{proof}
We note that this loss ensures \textit{pointwise} invertibility. Additional restrictions are needed to ensure that the log likelihood can be calculated efficiently.

\section{Other formulations for $\mathcal{L}_{\text{avg}}$}
\label{app:other_losses}
There are various formulations of $\mathcal{L}_{\text{avg}}$ that have been explored in the recent literature. We note that \shortname directly benefits from any new advancements in this rapidly-evolving research direction. Specifically, the following three different $\mathcal{L}_{\text{avg}}$ losses can be used for training flow maps.
\begin{enumerate}[topsep=0pt, partopsep=0pt, itemsep=3pt, parsep=0pt, leftmargin=*]
    \item $ \mathcal{L}_1 \triangleq \mathbb{E}_{s,t,x_s} \bigg \| u_\theta(s,t,x_s) - \texttt{sg}\bigg( v(x_s, s) - (t-s) \big(v(x_s,s) \partial_{x_s}u_\theta + \partial_s u_\theta \big)\bigg) \bigg \|^2$, which is equivalent to the MeanFlow loss of \cite{geng2025meanflowsonestepgenerative}, as well as the ESD objective in \cite{boffi2025buildconsistencymodellearning}. Note that since $x_s = sx_1 + (1-s)x_0$, we can directly use $v(x_s,s) = x_1 - x_0$.
    \item $ \mathcal{L}_2 \triangleq \mathbb{E}_{s,t,x_s} \bigg \| u_\theta(s,t,x_s) - \texttt{sg}\bigg( \lambda u_\theta(x_s, s, r) + (1-\lambda)u_\theta(X_u(x_s, s, r), r, t) \bigg) \bigg \|^2$, for $r = \lambda s + (1-\lambda) t$, is equivalent to the SplitMeanFlow loss of \cite{guo2025splitmeanflowintervalsplittingconsistency}, as well as the scaled PSD objective in \cite{boffi2025buildconsistencymodellearning}. In \cite{boffi2025buildconsistencymodellearning}, two ways of sampling the intermediate time $r$ are explored. Deterministic mid-point sampling, which sets $\lambda=0.5$, as well as uniform sampling of $\lambda$. \cite{guo2025splitmeanflowintervalsplittingconsistency} only explores uniform sampling. This is also a generalization of \cite{frans2025stepdiffusionshortcutmodels}, which additionally samples $s$ and $t$ in a binary-tree fashion. We denote this loss when integrated in our setting as \shortname-A. We use $\lambda=0.5$ for all experiments in this setting as we found it was most stable.
    \item $\mathcal{L}_3 \triangleq \mathbb{E}_{s,t,x_s} \bigg \| \partial_t X_u(x_s, s, t) - \texttt{sg}\bigg( u_\theta \big (X_u(x_s, s, t), t, t \big)\bigg) \bigg \|^2$ is the Lagrangian objective presented in \cite{boffi2025buildconsistencymodellearning}. We were not able to get this objective to train stably in our setting.
\end{enumerate}

\section{Extended Related Work}
\subsection{Relation to Free-Form Flows}
Free-form Flows (FFF) enable arbitrary architectures to function as normalizing flows by jointly training an auxiliary network to approximate the inverse of the forward model \citep{draxler2024free}. The forward model maps data to latents, while the auxiliary network regularizes this mapping by learning a tractable estimator for the Jacobian term in the change of variables. As correctly highlighted by \cite{draxler2024free}, although the loss encourages invertibility, this guarantee only holds when the reconstruction error is sufficiently small---a condition that can be difficult to meet in practice. In our setting, \shortname is trained with an entirely different loss function---albeit with a cycle-consistency term to promote invertibility---similar to that observed by~\cite{draxler2024free}. In addition, we explicitly validate invertibility for \shortname, in line with their observations through a dedicated experiment.

\subsection{Relation to RegFlows}
RegFlows~\citep{rehman2025fortforwardonlyregressiontraining} enable a more efficient training process for standard discrete normalizing flows by avoiding the maximum-likelihood (MLE) objective, whose unstable training dynamics often make optimization a challenge;\ instead, RegFlows distill the knowledge of a predefined invertible coupling---either using a pre-trained continuous normalizing flow or a pre-computed optimal transport map---into a discrete normalizing flow via a regression objective. While this sidesteps issues associated with MLE and yields an efficient training pipeline, it also imposes a strict requirement on the existence of an invertible reference map, which must be provided in advance.

Further, RegFlows are constrained to inherently invertible architectures, which limits their expressivity and ultimately their scalability. As we show in~\cref{tab:aldp_results} and~\cref{tab:allresults}, even when comparing against the strongest RegFlow variant (NSF), \shortname consistently outperforms it across all metrics, with pronounced gains on larger peptide systems. In contrast, \shortname is the first few-step Boltzmann Generator to employ a fully free-form architecture, surpassing state-of-the-art inherently invertible discrete flows (and even continuous normalizing flows) while avoiding the strict architectural constraints imposed by invertibility. Note, although \shortname is only guaranteed to be invertible either at convergence or via an additional regression objective, this relaxation enables substantially higher expressivity and delivers significant empirical performance gains. Lastly, we demonstrate the practical invertibility of \shortname through experiments performed in~\ref{sec:proof_of_invert}.

\section{Experimental Details}\looseness=-1
All training experiments are run on a heterogeneous cluster of NVIDIA H100 and L40S GPUs using distributed data parallelism (DDP). All models were trained with three random seeds, and reported values are averages across runs. Additional training and inference details are included in the following. For benchmarks with existing methods, in cases where dashes are present, data was unavailable, except for the case with SBG SMC \citep{tan_scalable_2025}, where ESS is not a valid metric. 

\subsection{Training Details}
\label{sec:model_details}
\textbf{Architecture.} We adopt a Diffusion Transformer (DiT) backbone for \shortname, with the same model size used for all peptides. The details of the backbone configuration are in~\cref{tab:dit_scaling_parameters} below. 
\begin{table}[h]
    \centering
    \caption{Overview of \shortname configurations across datasets.}\vspace{-10px}
    \label{tab:dit_scaling_parameters}
    \resizebox{0.8\linewidth}{!}{
    \begin{tabular}{lccccc}
        \toprule
        Dataset & Hidden Size & Blocks & Heads & Cond.\ Dim. & Parameters (M) \\
        \midrule
        Alanine dipeptide & 192 & 6  & 6  & 64 & 3.2  \\
        Tri-alanine & 192 & 6 & 6 & 64 & 3.2 \\
        Alanine tetrapeptide & 192 & 6 & 6 & 64 & 3.2 \\
        Hexa-alanine & 192 & 6 & 6 & 64 & 3.2 \\
        \bottomrule
    \end{tabular}}
\end{table}

\paragraph{Training Configuration} 
All models were trained with an exponential moving average (EMA) on the weights using a decay rate of 0.999. Logit values were clipped at 0.002, with compositional energy regularization disabled. For evaluation, we generated $10^4$ proposal samples, and used the same number for re-sampling and computing all metrics. Lastly, center-of-mass augmentation was applied with a standard deviation of $1/\sqrt{n}$, where $n$ is the number of particles. 

\paragraph{Hyperparameters} 
Optimization was performed using AdamW with learning rate $\text{lr} = 5 \times 10^{-4}$, $\beta = (0.9, 0.999)$, $\epsilon = 10^{-8}$, and weight decay $10^{-4}$. A cosine annealing learning rate schedule with a warm-up phase covering 5\% of the training iterations was also used.

\subsection{Inference Details}\label{app:inference}

In this section we clarify the inference strategies used in this work. Both in terms of strategies used for likelihood computation, as well as how to schedule evaluations of \shortname.

\subsubsection{Likelihood Computation}

Likelihood computation of CNFs generally uses the continuous-time formulation in \cref{eq:continuous_likelihood}, repeated below for clarity:
\begin{equation*}
\begin{bmatrix}
    x_t \\
    \log p_s^\theta(x_s)
\end{bmatrix}
 = \int_0^s \begin{bmatrix}
v_\theta(x_\tau, \tau) \\
-\text{tr} \left( \frac{\partial v_\theta}{\partial x_\tau} \right )\end{bmatrix}  d \tau, \text{ with initial condition }
\begin{bmatrix}
    x_0 \\
    \log p_0(x_0)
\end{bmatrix}
\end{equation*}
There are two main degrees of freedom in approximating $\log p_1^\theta$ for some generated sample, the integrator, and the trace computation or approximation.
\begin{itemize}
    \item \textbf{Integrator}: Following previous work we use the Dormand-Prince-45 (\textbf{Dopri5}) adaptive step size ODE integrator for all continuous-time models. To our knowledge, all Boltzmann Generators that use a base continuous normalizing flow generator use the Dopri5 solver as the importance sampling is otherwise too biased. Fixed-step size integrators, while effective for generation, are known to systematically overestimate likelihood for flow matching models.
    \item \textbf{Trace Computation}: There are two main strategies used in practice to estimate the trace $\text{tr} \left( \frac{\partial v_\theta}{\partial x_\tau} \right )$. Either using a direct exact computation, or using the Hutchinson trace approximation with a Monte Carlo estimate of size 1. We find the exact computation necessary for Boltzmann Generator applications to maintain stability of the importance sampling.
\end{itemize}
In practice, this is implemented using a $d+1$ dimensional ODE where the trace computation is computed in a vectorized manor using reverse mode autodiff.

For \shortname, the continuous-time formulation no longer applies, and we need to use the exact change of variables formula at each discrete time-step as detailed in \cref{alg:inference}.

\begin{algorithm}[H]
\caption{Inference for \shortname}
\label{alg:inference}
\KwIn{Sampleable $p_0$, trained network $u_\theta$, time schedule $
\{t_i\}_{i=0}^N$}
\KwOut{Samples $x_N \sim p^\theta_1$ with associated log likelihoods $p_N$}
$x_0 \sim p_0$\;
$p_0 \gets \log p_0(x_0)$\;
\For{$i \in [1, N]$}{
$(s, t) \gets (t_{i-1}, t_i)$\;
\tcc{where $X_u(x_s, s, t) = x_s + (t - s) u_\theta(x_s, s, t)$}
$x_i \gets X_u(x_{i-1}, s, t)$\;
$p_i \gets p_{i-1} - \log \left | \det \mathbf{J}_{X_u}(x_{i-1}) \right |$\;
}
\Return{$x_N, p_N$}\;
\end{algorithm}

\paragraph{Computational Cost} Cost-wise, this computation turns out to be bottle-necked by function evaluations, and is requires approximately $T_l \cdot d$ function evaluations where $T_l$ is the number of integration-steps chosen by the adaptive step-size solver for the likelihood computation, and $d$ is the dimension of the system. This is in contrast to the $T_g$ function evaluations required to generate samples where $T_g$ is the number of integration steps chosen in this generation-only case. In practice, we find $T_l>T_g$, and often up to 2-3x larger.

\shortname on the other hand has a computational cost of $N \cdot d$ function evaluations where $N$ is the chosen number of steps to evaluate \shortname. In practice we set $N << T_l$, e.g. \cref{tab:continuousresults} where $T_l \approx 200$ and $N \approx 8$. We note that technically there is an additional $O(d^3)$ term to calculate the determinant. However, in practice this is not a bottleneck compared to the cost of function evaluations.

\subsubsection{Inference Schedulers}
\label{sec:schedulers}
We evaluated five inference schedules to assess their impact on generative performance. The linear baseline distributes steps uniformly across the trajectory, while the geometric, cosine, Chebyshev, and EDM schedules bias step allocation to emphasize regions where the data distribution is more sensitive. The mathematical definitions and parameter settings for each schedule are provided below.
\paragraph{Linear.} The linear schedule spaces is distributed uniformly between $1$ and $0$, with $N \in \mathbb{N}$ denoting the total number of inference steps such that:\
\begin{equation}
t_i = 1 - \frac{i}{N}, \quad i \in \{0, \dots, N\}.
\end{equation}

\paragraph{Geometric.}The geometric schedule exponentially allocates more resolution to early steps. We let $\alpha \in \mathbb{R}$ and $\alpha > 1$, denote the geometric base. For all experiments conducted, we set $\alpha = 2$:\
\begin{equation}
t_i = \frac{\alpha^{\,N-i} - 1}{\alpha^N - 1}, \quad i \in \{0, \dots, N\}.
\end{equation}

\paragraph{Cosine.} 
The cosine schedule follows a squared-cosine law, concentrating more steps near $t=0$. This has been used in diffusion models for improved stability. With $N \in \mathbb{N}$:
\begin{equation}
t_i = \cos^2\!\left( \frac{\pi}{2} \cdot \frac{i}{N} \right), \quad i \in \{0, \dots, N\}, \quad t_N = 0.
\end{equation}

\paragraph{Chebyshev.} 
The Chebyshev schedule, derived from the nodes of Chebyshev polynomials of the first kind, minimizes polynomial interpolation error, and distributes steps more densely near boundaries:\
\begin{equation}
t_i = \tfrac{1}{2}\Big(\cos\!\Big(\tfrac{\pi (i + 0.5)}{N+1}\Big) + 1\Big), \quad i \in \{0, \dots, N\}, \quad t_N = 0.
\end{equation}

\paragraph{EDM.} 
The EDM schedule \citep{karras2022elucidating} parameterizes the noise level $\sigma$ using a power-law with exponent $\rho \in \mathbb{R}^{+}$. The interpolation is performed in $\rho$-space to allocate more steps where the generative process is most sensitive. For all experiments performed, we use the same parameters proposed by \cite{karras2022elucidating}, with $\rho=7$, $\sigma_{\min}=10^{-3}$, and $\sigma_{\max}=1$, and $\sigma_{\min}, \sigma_{\max} \in \mathbb{R}^{+}$:
\begin{equation}
\sigma_i = \left( \sigma_{\max}^{1/\rho} + \tfrac{i}{N}\left( \sigma_{\min}^{1/\rho} - \sigma_{\max}^{1/\rho} \right) \right)^{\rho}, \quad i \in \{0, \dots, N\},
\end{equation}
\begin{equation}
t_i = \frac{\sigma_i - \sigma_{\min}}{\sigma_{\max} - \sigma_{\min}}, \quad t_N = 0.
\end{equation}
This formulation ensures a smooth transition between $\sigma_{\max}$ (high noise) and $\sigma_{\min}$ (low noise, near-deterministic refinement).

\subsection{Dataset Details}
\label{sec:appendix_datasets}\looseness=-1
For the datasets used, we follow the same training, validation, and test split used by \mbox{\citep{tan_scalable_2025}}, where a single MCMC chain is decomposed into $10^5$, $2 \times 10^4$, and $10^4$ samples for training, validation, and test. The training and validation data are each taken as contiguous regions of the chain to simulate the realistic scenario where you have generated an MCMC trajectory and would like to use a Boltzmann Generator to continue generating samples. The data is split so that (after warmup) the first $10^5$ samples are for train, the next $2 \times 10^4$ are for validation, and the test samples are uniformly strided subsamples from the remaining MCMC chain. Earlier parts of the trajectory undersample certain modes enabling a biased training set that we attempt to debias through access to the energy function and SNIS. MD simulations were all run for 1 \textmu s, with a timestep of 1 fs, at temperatures of 300K, 310K, and 300K for alanine dipeptide, tri-alanine, and alanine tetrapeptide, respectively, in line with those conducted by \mbox{\cite{klein_transferable_2024}}. The force fields used for all three molecules, in the same order, were the \texttt{Amber ff99SBildn}, \texttt{Amber 14}, and \texttt{Amber ff99SBildn}, in line with prior work by \mbox{ \cite{tan_scalable_2025}}. 

\paragraph{Alanine Dipeptide.} The Ramachandran plots for the training and test split are provided in~\cref{fig:aldptraining}. Following \cite{klein2023equivariant}, we up sample the lowest sampled mode (middle right) to make the problem easier for BGs. This also has the added benefit of testing what happens to \shortname when the training dataset is biased relative to the true distribution.
\begin{figure}[!h]\centering
\includegraphics[width=0.4\textwidth]{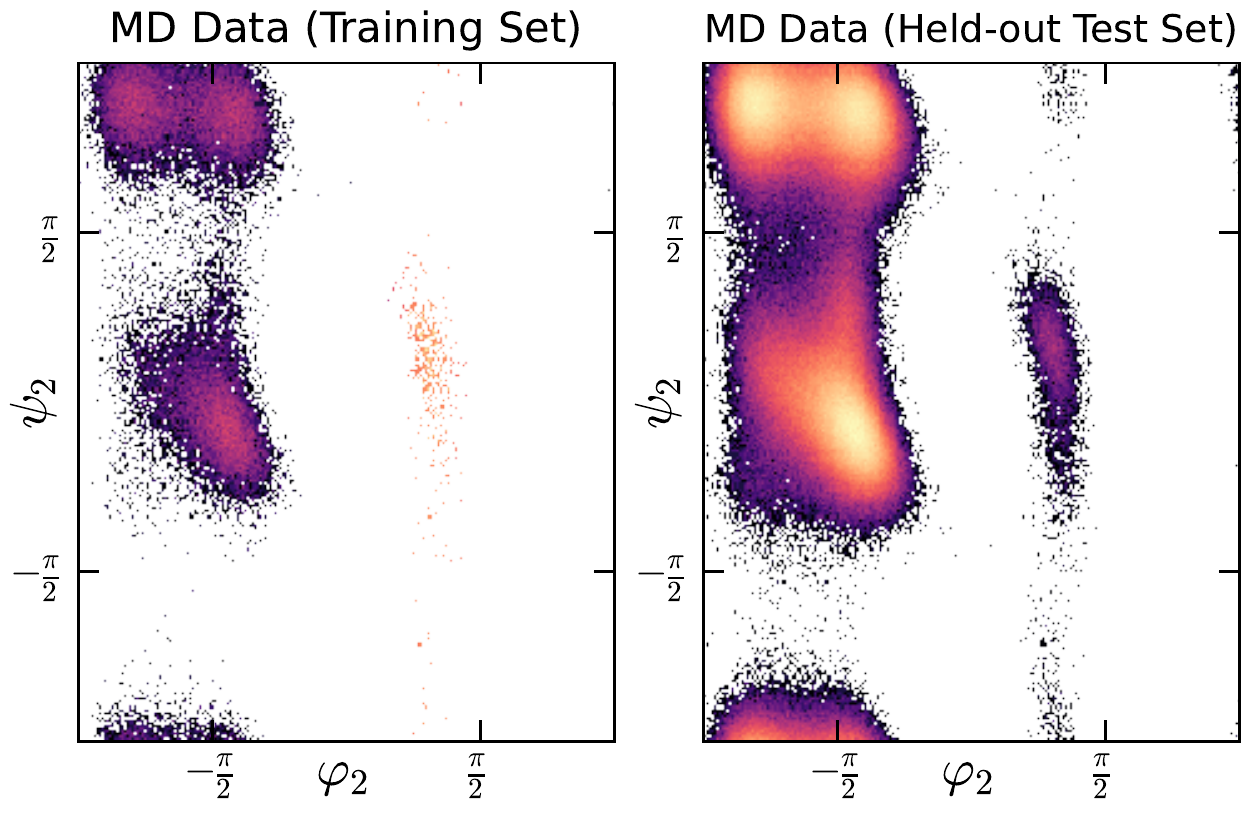}
\caption{\textbf{Left:}\ Training data for alanine dipeptide;\ \textbf{Right:} Test data for alanine dipeptide.}
\label{fig:aldptraining}
\end{figure}

\looseness=-1
\paragraph{Tri-alanine.} Similarly to alanine dipeptide, the Ramachandran plots for the training and test split are in~\cref{fig:al3training}. As there are two pairs of torsion angles that parameterize the system, there are two sets of Ramachandran plots included for each training and test. In tri-alanine, we can see that the training set actually misses a mode entirely ($\psi_1 \approx \pi/3$), and undersamples this mode for $\psi_2$ relative to the test set. This is a great test of finding a new mode (in $\psi_1$) and correctly weighting a mode (in $\psi_2$).
\begin{figure}[!h]\centering
\includegraphics[width=0.75\textwidth]{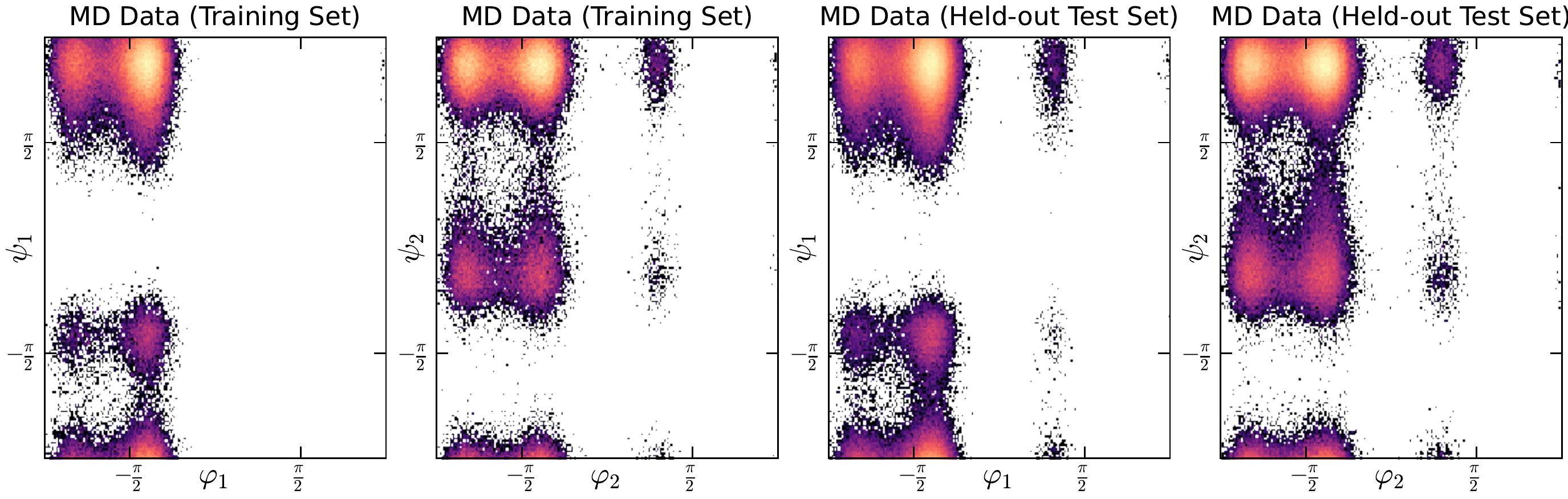}
\caption{\textbf{Left and left center:}\ Training data for tri-alanine;\ \textbf{Right center and right:} Test data for tri-alanine.}
\label{fig:al3training}
\end{figure}

\paragraph{Alanine Tetrapeptide.} For the tetrapeptide, there are three sets of Ramachandran plots each for training and test given the three pairs of torsions angles that parameterize the molecule, in~\cref{fig:al4training}. 
\begin{figure}[!h]\centering
\includegraphics[width=\textwidth]{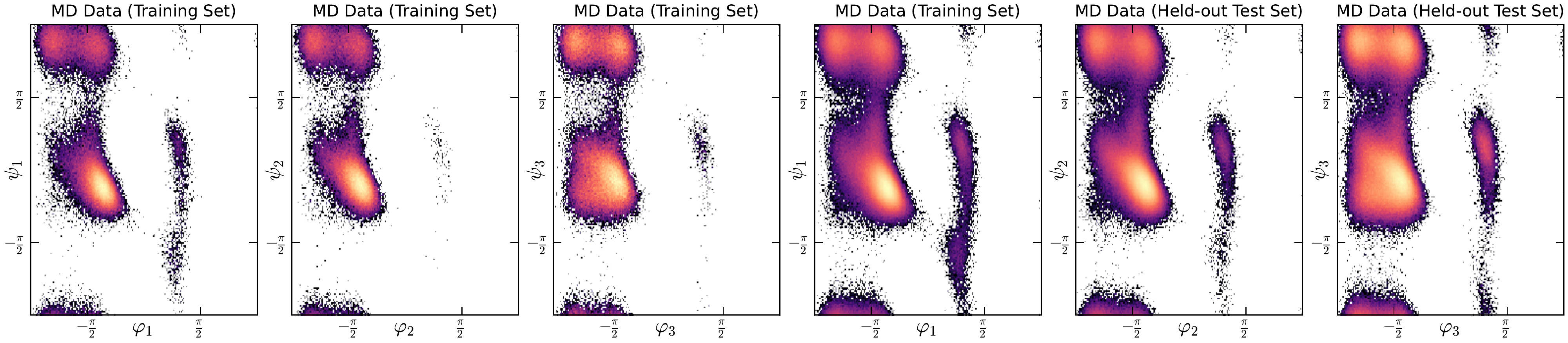}
\caption{\textbf{First three:}\ Training data for alanine tetrapeptide;\ \textbf{Last three:} Test data for alanine tetrapeptide.}
\label{fig:al4training}
\end{figure}

\looseness=-1
\paragraph{Hexa-alanine.} For hexa-alanine, we also include Ramachandran plots in~\cref{fig:al6training}. The first row shows training data, while the second row shows a held-out test set used to evaluate performance.
\begin{figure}[!h]\centering
\includegraphics[width=\textwidth]{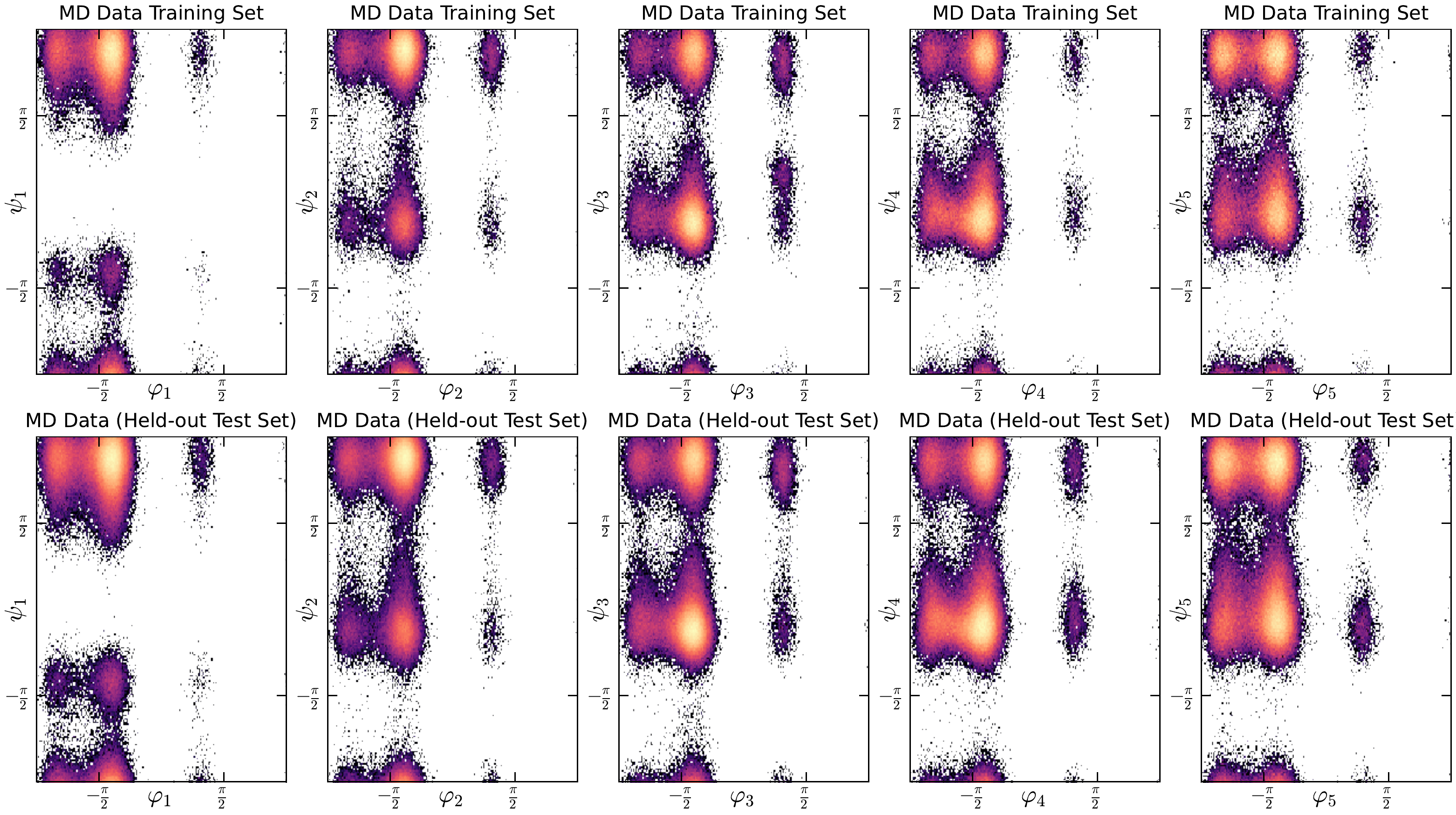}
\caption{\textbf{First five:}\ Training data for hexa-alanine;\ \textbf{Last three:} Test data for hexa-alanine.}
\label{fig:al6training}
\end{figure}

\section{Metrics}
\label{sec:metric_definitions}
\paragraph{Effective Sample Size (ESS).} 
To quantify sampling efficiency, we compute the effective sample size (ESS) following Kish’s definition \citep{kish1957confidence}. Given $N \in \mathbb{N}$ generated particles with unnormalized importance weights $\{w_i\}_{i=1}^N \subset \mathbb{R}^{+}$, the ESS is normalized by $N$ as:
\begin{equation}
\mathrm{ESS}(\{w_i\}_{i=1}^N) \triangleq \frac{1}{N} \frac{1}{\sum_{i=1}^N w_i^2}\left(\sum_{i=1}^N w_i \right)^{2}.
\end{equation}
The ESS reflects how many independent, equally-weighted samples would provide equivalent statistical power to the weighted sample. Higher ESS is desirable for more performant models. 

\paragraph{2-Wasserstein Energy Distance (\emetric).} 
To compare energy distributions, we compute the $2$-Wasserstein distance between generated and reference samples from our MD dataset. For distributions $p, q \in \mathcal{P}(\mathbb{R})$ over energy values, and $\Pi(p,q)$ the set of admissible couplings, we define:
\begin{equation}
\mathcal{E}\text{-}\mathcal{W}_2(p,q)^2 \triangleq \min_{\pi \in \Pi(p,q)} \int_{\mathbb{R} \times \mathbb{R}} |x - y|^2 \, d\pi(x,y).
\end{equation}
The \emetric measures how closely the generated energy histogram aligns with that of the reference, with high sensitivity to structural accuracy due to bond-length–dependent energies. Since small perturbations in local structure  induce large fluctuations in the energy distribution, this metric captures this variance, with lower values of \emetric being more favourable for generative models. 

\paragraph{Torus 2-Wasserstein Distance (\torusmetric).} 
To assess structural similarity in torsional space, we compute a $2$-Wasserstein distance over dihedral angles. For a molecule with $L \in \mathbb{N}$ residues, define the dihedral vector as:
\begin{equation}
\mathrm{Dihedrals}(x) = (\phi_1, \psi_1, \dots, \phi_{L-1}, \psi_{L-1}) \in [0, 2\pi)^{2(L-1)}.
\end{equation}
The cost on the torus accounts for periodicity of angles:
\begin{equation}
c_{\mathcal{T}}(x,y)^2 = \sum_{i=1}^{2(L-1)} \big[ (\mathrm{Dihedrals}(x)_i - \mathrm{Dihedrals}(y)_i + \pi) \bmod 2\pi - \pi \big]^2.
\end{equation}
The torus $2$-Wasserstein distance between two distributions $p, q \in \mathcal{P}([0,2\pi)^{2(L-1)})$ is then:\
\begin{equation}
\text{\torusmetric} (p,q)^2 \triangleq \min_{\pi \in \Pi(p,q)} \int c_{\mathcal{T}}(x,y)^2 \, d\pi(x,y).
\end{equation}
This captures macroscopic conformational differences while respecting angular periodicity. The \torusmetric provides a more global assessment of performance, for instance revealing missed conformational modes that would not be captured by the \emetric.

\section{Additional Experiments} \label{sec:app-ablations}

\subsection{Details for Figure Generation}
\label{sec:figure_details}\looseness=-1
In~\cref{fig:performancevtime}, we compare performance on the \torusmetric metric for alanine dipeptide between \shortname and our continuous-time DiT CNF. For \shortname, accuracy is controlled by the number of inference steps (1–8). For the DiT CNF, we consider three settings:\ (\textbf{1}) adaptive step Dormand--Prince 4(5) with exact Jacobian trace evaluation, varying \text{atol}/\text{rtol} from $10^{-1}$ to $10^{-5}$;\ (\textbf{2}) the same tolerances but instead using the Hutchinson trace estimator, trading performance for faster likelihoods with higher variance;\ and (\textbf{3}) fixed-step Euler integration with step sizes from 4 to 256 with exact Jacobian traces, where the upper bound is chosen to roughly match the number of function evaluations needed for Dopri5. The models and training configurations used are presented in~\cref{sec:model_details}. 

\looseness=-1
In~\cref{fig:optimalperformance}, we demonstrate the energy distributions for unweighted and re-weighted samples for our most performant \shortname Flows. For alanine dipeptide, tri-alanine, alanine tetrapeptide, and hexa-alanine, 4, 8, 8, and 16 steps were used for figure generation. Energy distributions reveal microscopic detail, as marginal changes in local atomic position can have significant impacts on total energy. The best models for each molecular system were used, with pertinent details on model size, training configurations, and hyperparameters detailed in~\cref{sec:model_details}. Similarly, in~\cref{fig:coarsegraining}, we show the  proposal and re-weighted sample estimates for alanine dipeptide, and demonstrate how an increasing number of steps, improves the energy distribution.

\looseness=-1
In~\cref{fig:sampling_speeds}, we compare the performance between SBG's discrete NFs and \shortname, illustrating that despite more samples increasing performance across metrics, the approach is still unable to reach \shortname's performance. For SBG, we specifically take their best model weights for the TarFlow architecture from: \text{\url{https://github.com/transferable-samplers/transferable-samplers}}, and draw $N_s \in \{10^4, 2 \times 10^4, 5 \times 10^4, 10^5, 2 \times 10^5, 5 \times 10^5, 10^6, 2 \times 10^6, 5 \times 10^6\}$ samples three different times (the error bars are representative of the three draws) and evaluate \emetric for each set. 

\looseness=-1
In~\cref{fig:invertibility_regularization}, we investigate how the strength of regularization impacts performance on ESS and \emetric. Specifically, we demonstrate that small amounts of regularization enable generative performance but impede invertibility, while too much regularization detrimentally impacts sample quality. This trade-off leads to an optimum on both metrics. We investigate $\lambda_r \in \{10^0, 10^1, 10^2, 10^3, 10^4, 10^5\}$, and conclude upon $\lambda_r = 10^1$. To ascertain the optimal $\lambda_r$, we ran these experiments on alanine dipeptide, using the same model details and configurations highlighted in~\cref{sec:model_details}.

\looseness=-1
\cref{fig:sampler_choice} demonstrates the sensitivity the inference schedule plays on generative performance. For this figure, we took our best models---trained with the details presented in~\cref{sec:model_details}---and ran inference using all three trained seeds to generate uncertainties for each inference schedule. We used our 8-step system, as inference scheduler choice is less important the fewer steps are used.

\subsection{Proof of Invertibility}
\label{sec:proof_of_invert}
\begin{wrapfigure}{r}{5cm}
\centering\vspace{-10px}
\includegraphics[width=0.36\textwidth]{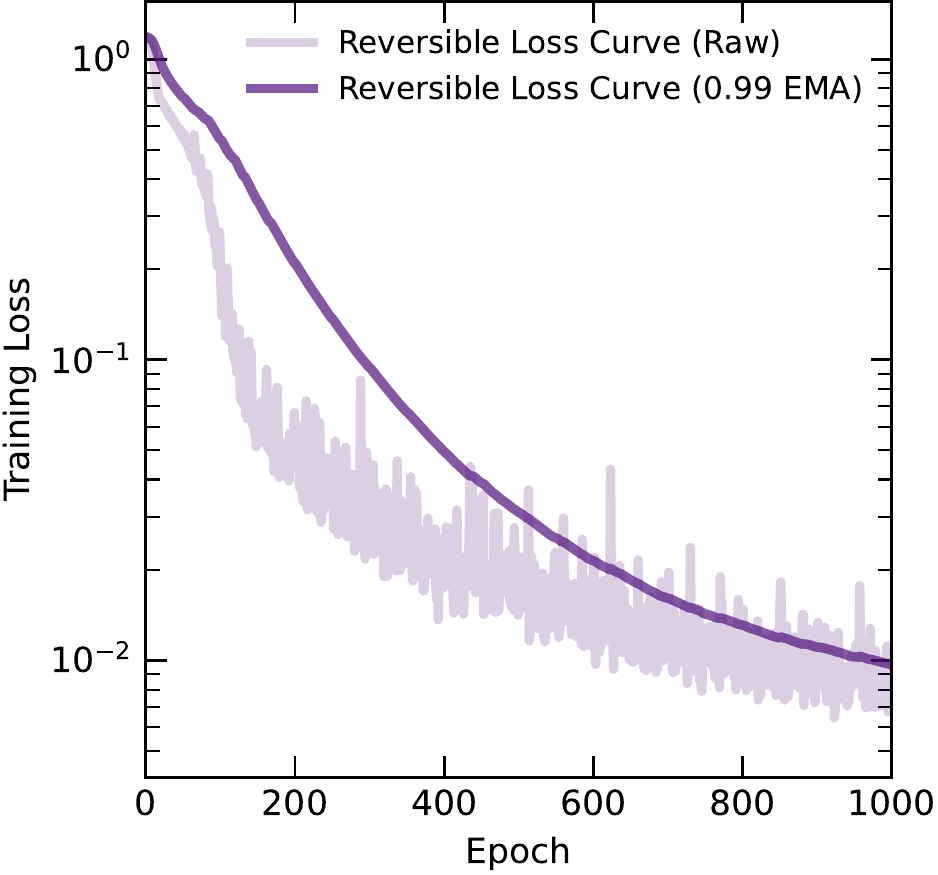}\vspace{-5px}
\caption{Auxiliary model loss from reflow target training on forward flow.}
\label{fig:invertibility}\vspace{-10px}
\end{wrapfigure}
In our loss, we use a cycle-consistency term that regularizes training to promote numerical invertibility. We see in~\cref{fig:invertibility_regularization}, the introduction of this modified loss aids generative performance;\ however,  the forward-backward reconstruction yields errors on the order of $10^{-2}$, indicating an approximately, but not entirely invertible model. To prove that the flow is invertible (but the inverse is challenging to discover during training), we train an auxiliary \shortname Flow in the reverse direction. We use the same network to parameterize the reverse flow as described in~\cref{sec:model_details}, with the same training configuration and hyperparameter set. Next, we freeze the forward model (which goes from latents $\to$ data), generate synthetic data ($2 \times 10^5$ prior-target sample pairs) and train the auxiliary model on these reflow targets to learn the mapping  from data back to latents. In ~\cref{fig:invertibility}, we illustrate the loss curve of the trained auxiliary \shortname. To evaluate invertibility, we draw i.i.d.\ samples from our prior distribution, pass them through the frozen forward model, and test the auxiliary reverse model on these unseen latents. We evaluate an $\ell_2$ error on the recovered latents to find reconstruction within $10^{-4}$ after $1000$ epochs matching the reconstruction accuracy of discrete NFs that are invertible by design. These results support the claim that the learned flow is indeed invertible.

\subsection{Performance Against Number of Function Evaluations}\label{app:performance_vs_nfe}
\begin{wrapfigure}{r}{6.5cm}
\centering\vspace{-12px}
\includegraphics[width=0.45\textwidth]{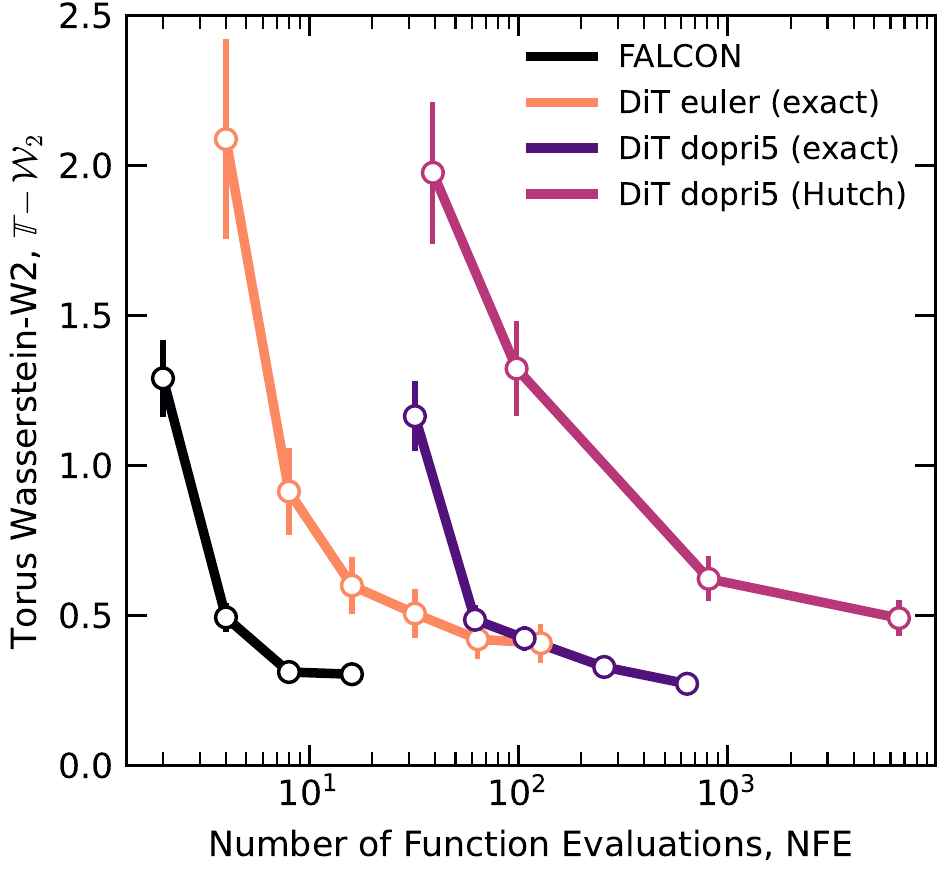}\vspace{-5px}
\caption{Performance of \shortname vs.\ our DiT CNFs as a function of NFEs.}
\label{fig:torusvstime}\vspace{-15px}
\end{wrapfigure}
\cref{fig:torusvstime} highlights the efficiency of \shortname in terms of number of function evaluations. We see that \shortname achieves the same torus $2$-Wasserstein performance as our DiT CNF with Dopri5, while requiring over two orders of magnitude fewer function evaluations. Whereas~\cref{fig:performancevtime} quantified efficiency in terms of inference time, here we measure the number of function evaluations  against both fixed-step solvers (Euler with 4–256 steps) and adaptive solvers (Dopri5 run until reaching target tolerances $\text{atol}=\text{rtol}\in\{10^{-2},10^{-3},10^{-4},10^{-5}\}$ for Hutchinson and $\in\{10^{-2},10^{-3},10^{-4},10^{-5},10^{-6}\}$ for exact). 

Although Hutchinson’s trace estimator yields a speedup relative to exact Jacobian computation for CNFs, as seen in~\cref{fig:performancevtime}, the number of function evaluations needed is higher than the exact Jacobian computation. In either setting, however, \shortname still remains substantially faster. Even with a 4-step solver, \shortname matches the accuracy of DiT CNFs using Hutchinson at $\text{atol}=\text{rtol}=10^{-5}$, with an 8-step solver nearly matching the performance of a DiT CNF with Dopri5 set to an $\text{atol}=\text{rtol}=10^{-6}$.

\subsection{Ramachandran Plots}
\label{sec:ramachandran_figures}
\paragraph{Alanine Dipeptide.} We demonstrate \shortname's capacity to learning global features through the Ramachandran plots in~\cref{fig:aldp_rama} for alanine dipeptide. We include both the held-out test set and the learned model's map, showcasing its ability to debias the data and capture the true MD distribution. Specifically, the undersampled $\phi$ mode in the training data is correctly upweighted in the learned model's predictions, indicating accurate likelihood estimates from the learned flow.
\begin{figure}[!h]\centering
\includegraphics[width=0.55\textwidth]{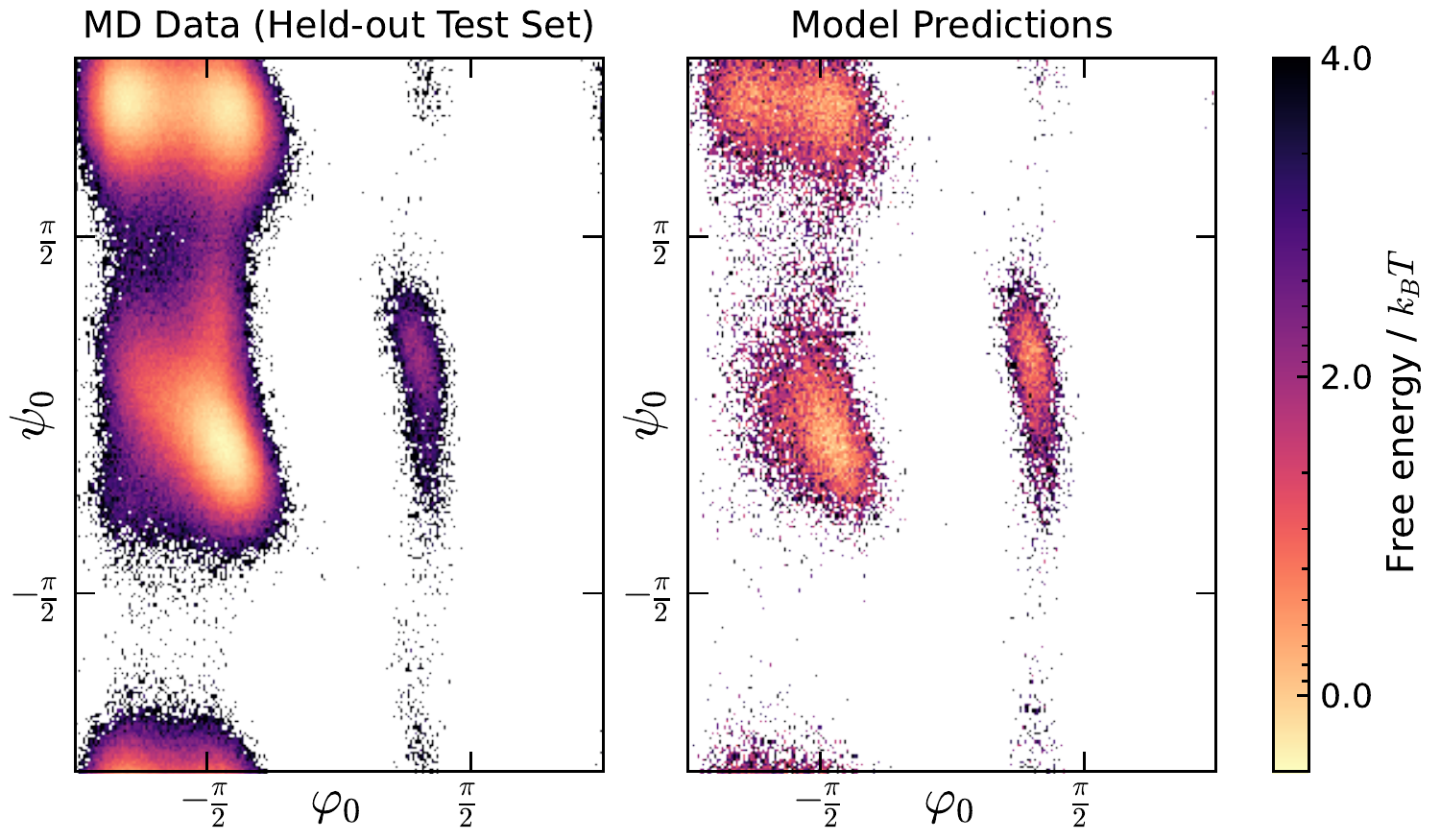}
\caption{\textbf{Left:}\ Test data for alanine dipeptide;\ \textbf{Right:} \shortname's angular predictions for alanine dipeptide.}
\label{fig:aldp_rama}
\end{figure}

\looseness=-1
\paragraph{Tri-alanine.}Similarly, we show the Ramachandran plots for tri-alanine in~\cref{fig:al3_rama} exhibiting similar behaviour. Most conformations are correctly captured, with some modes being underweighted. 
\begin{figure}[!h]\centering
\includegraphics[width=0.9\textwidth]{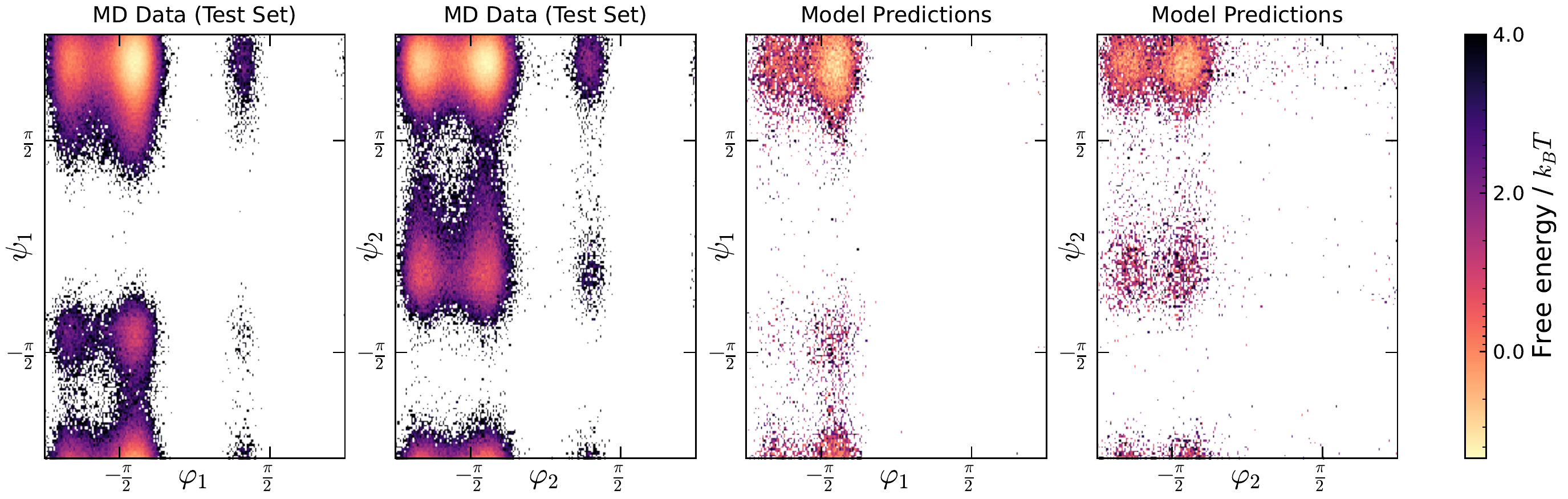}
\caption{\textbf{Left and left center:}\ Test data for tri-alanine;\ \textbf{Right and right center:} \shortname's angular predictions for tri-alanine.}
\label{fig:al3_rama}
\end{figure}

\paragraph{Alanine Tetrapeptide.}Next, we show the Ramachandran plots for alanine tetrapeptide in~\cref{fig:al4_rama}. 
\begin{figure}[!h]\centering
\includegraphics[width=\textwidth]{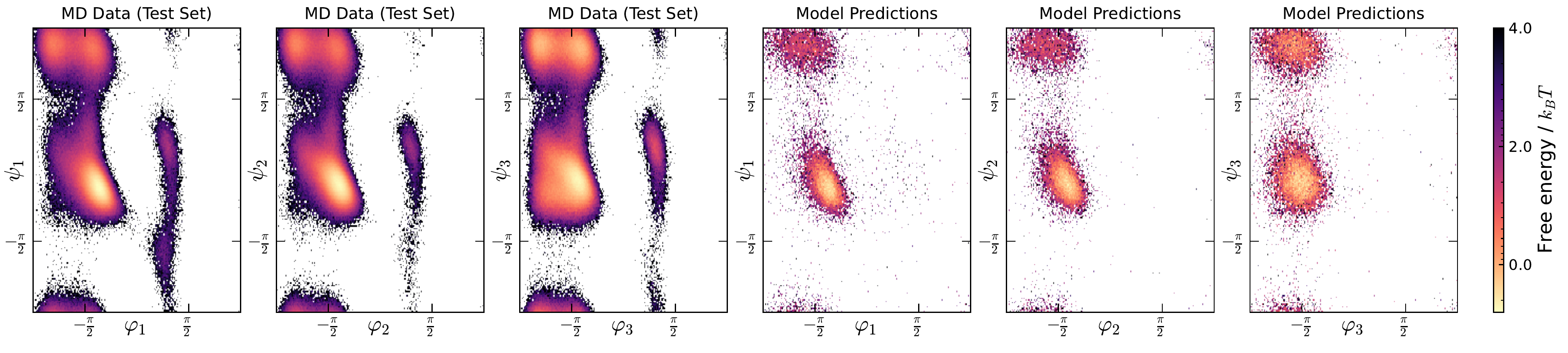}
\caption{\textbf{First three:}\ Test data for alanine tetrapeptide;\ \textbf{Last three:} \shortname's angular predictions for alanine tetrapeptide.}
\label{fig:al4_rama}
\end{figure}

\paragraph{Hexa-alanine.}Finally, we show the Ramachandran plots for hexa-alanine in~\cref{fig:al6_rama}. 
\begin{figure}[!h]\centering
\includegraphics[width=\textwidth]{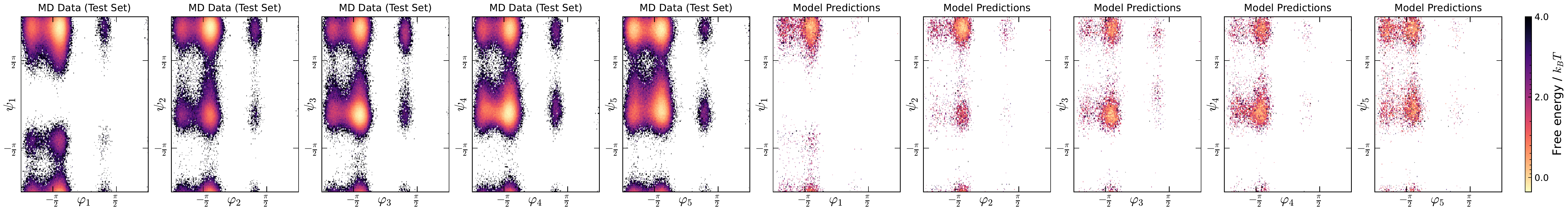}
\caption{\textbf{First five:}\ Test data for hexa-alanine;\ \textbf{Last five:} \shortname's angular predictions for hexa-alanine.}
\label{fig:al6_rama}
\end{figure}

\end{document}